\documentclass[11pt,twoside]{article}
\usepackage{fullpage}
\usepackage{epsf}
\usepackage{fancyheadings}
\usepackage{hyperref}
\usepackage{url}

\usepackage{pstricks,pst-node,pst-text,pst-3d}
\usepackage{color}

 \usepackage{amsthm}
 \usepackage{amsfonts}
 \usepackage{amsmath}
 \usepackage{amssymb}
\usepackage{bbm}

\usepackage{float}
\floatstyle{ruled}
\newfloat{procedure}{thp}{lop}
\floatname{procedure}{Procedure}

\newfloat{algorithm}{thp}{lop}
\floatname{algorithm}{Algorithm}




\usepackage{enumerate}

\usepackage{graphics}
\usepackage{graphicx}
\usepackage{psfrag}
\usepackage{pgf,tikz}
\usepackage{mathrsfs}
\usetikzlibrary{arrows}

\usepackage{epsf}
 \usepackage{subfigure}
 \usepackage{caption}
\usepackage[titletoc,title]{appendix}



\setlength{\textwidth}{\paperwidth}
\addtolength{\textwidth}{-6cm}
\setlength{\textheight}{\paperheight}
\addtolength{\textheight}{-4cm}
\addtolength{\textheight}{-1.1\headheight}
\addtolength{\textheight}{-\headsep}
\addtolength{\textheight}{-\footskip}
\setlength{\oddsidemargin}{0.5cm}
\setlength{\evensidemargin}{0.5cm}

\newcommand{\bcar}{\begin{carlist}}
\newcommand{\ecar}{\end{carlist}} 

\newlength{\widebarargwidth}
\newlength{\widebarargheight}
\newlength{\widebarargdepth}

\makeatletter
\long\def\@makecaption#1#2{
        \vskip 0.8ex
        \setbox\@tempboxa\hbox{\small {\bf #1:} #2}
        \parindent 1.5em  
        \dimen0=\hsize
        \advance\dimen0 by -3em
        \ifdim \wd\@tempboxa >\dimen0
                \hbox to \hsize{
                        \parindent 0em
                        \hfil 
                        \parbox{\dimen0}{\def\baselinestretch{0.96}\small
                                {\bf #1.} #2
                                } 
                        \hfil}
        \else \hbox to \hsize{\hfil \box\@tempboxa \hfil}
        \fi
        }
\makeatother

\newenvironment{carlist}
 {\begin{list}{$\bullet$}
 {\setlength{\topsep}{0in} \setlength{\partopsep}{0in}
  \setlength{\parsep}{0in} \setlength{\itemsep}{\parskip}
  \setlength{\leftmargin}{0.07in} \setlength{\rightmargin}{0.08in}
  \setlength{\listparindent}{0in} \setlength{\labelwidth}{0.08in}
  \setlength{\labelsep}{0.1in} \setlength{\itemindent}{0in}}}
 {\end{list}}




\theoremstyle{plain}


\newtheorem{theos}{Theorem}
\newtheorem{claim}{Claim}
\newtheorem{props}{Proposition}
\newtheorem{lems}{Lemma}

\theoremstyle{remark}

\theoremstyle{remark}

\theoremstyle{remark}


\long\def\comment#1{}












%







\newcommand{\widgraph}[2]{\includegraphics[keepaspectratio,width=#1]{#2}}


\def\NN{ \mathbb{N} }						
\def\eps{ \epsilon }
\def\EE{ \mathbb{E} }
\def\PP{ \mathbb{P} }
\def\E{ \mathrm{e} }	
\newcommand{\Indi}{\mathbbm{1}}


\newcommand{\barmu}{\bar{\mu}}

\newcommand{\numarms}{K}

\newcommand{\thenull}{H_0}

\newcommand{\noarms}{\numarms}
\newcommand{\nullp}[1]{P^{#1}}

\newcommand{\dev}{\varphi}
\newcommand{\banditstop}{T(\putinalpha)}

\newcommand{\rowstop}{T}
\newcommand{\trunctime}{M}

\newcommand{\empmu}{\widehat{\mu}}

\newcommand{\wlth}{W}
\newcommand{\sfield}{\mathcal{F}}
\newcommand{\FDR}{\text{FDR}}
\newcommand{\mFDR}{\text{mFDR}}
\newcommand{\BDR}{\text{BDR}}

\newcommand{\LCB}{\text{LCB}}
\newcommand{\UCB}{\text{UCB}}
\newcommand{\realbest}{i_{\star}}

\newcommand{\numexp}{J}

\newcommand{\truenulls}{\mathcal{H}_0}
\newcommand{\falsenulls}{\mathcal{H}_1}
\newcommand{\wealth}{W}

\newcommand{\rej}{R}
\newcommand{\defn}{:=}
\newcommand{\mumax}{\mu_{\realbest}}

\newcommand{\epsBDR}{\eps\BDR}
\newcommand{\MAB}{\text{MAB}}

\newcommand{\fABFDR}{AB-FDR}
\newcommand{\fMABFDR}{MAB-FDR}
\newcommand{\fMABIND}{MAB-IND}
\newcommand{\FDP}{\text{FDP}}
\newcommand{\Gap}{\Delta}

\newcommand{\putinalpha}{\alpha_j}
\newcommand{\tautil}{\widetilde{\tau}}
\newcommand{\UpperSample}{B}
\newcommand{\effgap}{\widetilde{\Gap}}

\newcommand{\SetStar}{\ensuremath{\mathcal{S}^{\star}}}
\newcommand{\Output}{i_b}
\newcommand{\event}{\ensuremath{\mathcal{E}}}
\newcommand{\tindex}{\ensuremath{s}}


\begin{document}


\begin{center}

  {\bf{\LARGE{A framework for Multi-A(rmed)/B(andit) testing \\
        with online FDR control}}}

\vspace*{.2in}

{\large{
\begin{tabular}{cccc}
Fanny Yang$^{\star}$ & Aaditya Ramdas$^{\dagger,\star}$ & Kevin Jamieson$^{\star}$ & Martin J. Wainwright$^{\dagger,\star}$ \\
\end{tabular}
}}

\vspace*{.2in}

\begin{tabular}{c}
Department of Statistics$^\dagger$, and \\
Department of Electrical Engineering and Computer Sciences$^\star$ \\
UC Berkeley,  Berkeley, CA  94720
\end{tabular}

\vspace*{.1in}


\begin{abstract}
    We propose an alternative framework to existing setups for
  controlling false alarms when multiple A/B tests are run over time.
  This setup arises in many practical applications, e.g. when
  pharmaceutical companies test new treatment options against control
  pills for different diseases, or when internet companies test their
  default webpages versus various alternatives over time.  Our
  framework proposes to replace a sequence of A/B tests by a sequence
  of best-arm MAB instances, which can be continuously monitored by
  the data scientist.  When interleaving the MAB tests with an an
  online false discovery rate (FDR) algorithm, we can obtain the best
  of both worlds: low sample complexity and any time online FDR
  control.  Our main contributions are: (i) to propose reasonable
  definitions of a null hypothesis for MAB instances; (ii) to
  demonstrate how one can derive an always-valid sequential $p$-value
  that allows continuous monitoring of each MAB test; and (iii) to
  show that using rejection thresholds of online-FDR algorithms as the
  confidence levels for the MAB algorithms results in both
  sample-optimality, high power and low FDR at any point in time.  We
  run extensive simulations to verify our claims, and also report
  results on real data collected from the New Yorker Cartoon Caption
  contest.
\end{abstract}

\end{center}

\section{Introduction}

For most modern internet companies, wherever there is a metric that
can be measured (e.g., time spent on a page, click-through rates,
conversion of curiousity to a sale), there is almost always a
randomized trial behind the scenes, with the goal of identifying 
an alternative website design that provides improvements over the default
design.  The use of such data-driven decisions for perpetual
improvement is colloquially known as \emph{A/B testing} in the case of
two alternatives, or \emph{A/B/n testing} for several alternatives.
Given a default configuration and several alternatives (e.g., color
schemes of a website), the standard practice is to divert a small
amount of scientist-traffic to a randomized trial over these alternatives
and record the desired metric for each of them.
If an alternative appears to be significantly better, it is
implemented; otherwise, the default setting is maintained. 

At first
glance, this procedure seems intuitive and simple.
However, in cases where the aim is to optimize over one particular
metric, this common tool suffers from several downsides. (1) First,
whereas some alternatives may be clearly worse than the default,
others may only have a slight edge.  If one wishes to minimize the
amount of time and resources spent on this randomized trial
the more promising alternatives should intuitively get a larger share
of the traffic than the clearly-worse alternatives.  Yet typical
A/B/n testing frameworks allocate traffic uniformly over alternatives.
(2) Second, companies often desire to continuously monitor an ongoing
A/B test as they may adjust their termination criteria as time goes by
and possibly stop earlier or later than originally intended.  However,
just as if you flip a coin long enough, a long string of heads is
eventually inevitable, the practice of continuous monitoring (without
mathematically correcting for it) can easily fool the tester to
believe that a result is statistically significant, when in reality it
is not. This is one of the reasons for the lack of
reproducibility of scientific results, an issue recently receiving
increased attention from the public media.
%
(3) Third, the lack of sufficient evidence or an insignificant
improvement of the metric may make it undesirable from a practical or
financial perspective to replace the default.  Therefore, when a
company runs hundreds to thousands of A/B tests within a year, ideally
the number of statistically insignificant changes that it made should
be small compared to the total number of changes made.  Controlling
the false alarm rate of each individual test at a desired level
$\alpha$ however does \emph{not} achieve this type of control, also
known as controlling the false discovery rate.  Of course, it is also
desirable to detect better alternatives (when they exist), and to do
so as quickly as possible. 

In this paper, we provide a novel framework that addresses the above
shortcomings of A/B or A/B/n testing.  The first concern is tackled by
employing recent advances in adaptive sampling like the
pure-exploration multi-armed bandit (MAB) algorithm.  For the second
concern, we adopt the notion of any-time $p$-values for guilt-free
continuous monitoring, and we make the advantages and risks of
early-stopping transparent.  Finally, we handle the third issue using
recent advances in online false discovery rate (FDR) control. Hence
the combined framework can be described as doubly-sequential
(sequences of MAB tests, each of which is itself sequential).
Although each of those problems has been studied in hitherto disparate
communities, how to leverage the best of all worlds, if at all
possible, has remained an open problem.  The main contributions of
this paper are in merging these ideas in a combined framework and
presenting the conditions under which it can be shown to yield
near-optimal sample complexity, near-optimal best-alternative
discovery rate, as well as FDR control.

While the above concerns raised about A/B/n testing were discussed
using the example of modern internet companies, the same
concerns carry forward qualitatively to other domains, like
pharmaceutical companies running sequential clinical trials with a
control (often placebo) and a few treatments (like different doses or
drug substances).  In a manufacturing or food production setting, one
may be interested in identifying (perhaps cheaper) substitutes for 
individual materials without compromising the quality of a product too
much.  In a government setting, pilot programs are funded in
search of improvements over current programs and it is desirable
from a  social welfare standpoint and cost to limit the adoption
of ineffective policies.

The remainder of this paper is organized as follows.  In
Section~\ref{SecGoals}, we lay out the primary goals of the paper, and
describe a meta-algorithm that combines adaptive sampling strategies
with FDR control procedures.  Section~\ref{SecConcrete} is devoted to
the description of a concrete procedure, along with some theoretical
guarantees on its properties.  In Section~\ref{SecExperiments}, we
describe the results of our extensive experiments on both simulated
and real-world data sets that are available to us, before we conclude with a discussion in
Section~\ref{SecDiscussion}.

  \vspace{-0.1in}
\section{Formal experimental setup and a meta-algorithm}
\label{SecGoals}

In this section we first formalize the setup of a typical A/B/n test
and provide a high-level overview of our proposed combined framework
aimed at addressing the shortcomings mentioned in the
introduction. A specific instantiation of this meta-algorithm along
with detailed theoretical guarantees are
specified in Section~\ref{SecConcrete}.

For concreteness, we refer to the system designer, whether a tech
company or a pharmaceutical company, as a (data) scientist.  We assume that the
scientist needs to possibly conduct an infinite number of experiments
sequentially, indexed by $j$.  Each experiment has one default
setting, referred to as the \emph{control}, and $\numarms =
\numarms(j)$ alternative settings, called the \emph{treatments} or
\emph{alternatives.}  The scientist must return one of the
$\numarms + 1$ options that is the ``best'' according to some
predefined metric, before the next experiment is started. Such a setup is
a simple mathematical model both for clinical trials run by
pharmaceutical labs, and A/B/n testing used at scale by tech
companies.

One full experiment consists of steps of the following kind: In each
step, the scientist assigns a new person---who arrives at the
website or who enrolls in the clinical trial---to one of the $\numarms
+1$ options and obtains a measurable outcome.  In practice, the role of the
scientist could be taken by an adaptive algorithm, which determines
the assignment at time step $j$ by careful consideration of all
previous outcomes.  Borrowing terminology from the multi-armed bandit
(MAB) literature, we refer to each of the $\numarms+1$ options as an
\emph{arm}, and each assignment to arm $i$ is termed ``pulling arm
$i$''.  For concreteness, we assign the index $0$ to the default or
control arm, and note that this index is known to the algorithm.

We assume that the observable metric from each pull of arm
$i=0,1,\dots,\numarms$ corresponds to an independent draw from an
unknown probability distribution with expectation $\mu_i$. Ideally, if
the means were known, we would use them as scores to compare the arms
where higher is better. In the sequel we use $\mumax \defn \max
\limits_{i=1,\dots, \numarms} \mu_i$ to denote the mean of the best
arm. 
We refer the reader to Table~\ref{tab:notation} for a glossary of the
notation used throughout this paper.

\vspace{-0.04in}
\subsection{Some desiderata and difficulties}

Given the setup above, how can we mathematically describe the
guarantees that the companies might desire from an improved
multiple-A/B/n testing framework?  Which parts of the puzzle can be
directly transferred from known results, and what challenges remain?

In order to answer the first question, let us adopt terminology from
the hypothesis testing literature and view each experiment as a test
of a \emph{null hypothesis}.  Any claim that an alternative arm is the
best is called a \emph{discovery}, and if such a claim is erroneous
then it is called a false discovery.
When multiple hypotheses need to be tested, the scientist needs
to define the quantity it wants to control. 
While we may desire that the probability of even a
single false discovery---called the family-wise error rate---is small,
this is usually far too stringent for a large and unknown number of
tests. For this reason, \cite{BH95} proposed that it may be more
interesting to control the expected ratio of false discoveries to the
total number of discoveries (called the False Discovery Rate, or
\emph{FDR} for short) or ratio of expected number of false discoveries
to the expected number of total discoveries (called the modified FDR
or \emph{mFDR} for short). 
Over the past decades, the FDR and its variants like mFDR have
become standard quantities for multiple testing applications. In the
following, if not otherwise specified, we use the term FDR to denote
both measures in order to simplify the presentation. In
Section~\ref{SecConcrete}, we show that
both mFDR and FDR can be controlled for different choices of
procedures.

\vspace{-0.1in}
\subsubsection{Challenges in viewing an MAB instance as a hypothesis test}

In our setup, we want to be able to control the FDR at any time in an
online manner. Online FDR procedures were first introduced by Foster
and Stine~\cite{FS08}, and have since been studied by other authors
(e.g.,~\cite{AR14,JM16}). A typical online FDR procedure is based on
comparing a valid $p$-value $\nullp{j}$ with carefully-chosen levels
$\alpha_j$ for each hypothesis test\footnote{A valid $\nullp{j}$ must
  be stochastically dominated by a uniform distribution on $[0,1]$,
  which we henceforth refer to as \emph{super-uniformly distributed}.}.
We reject the null hypothesis, represented as $R_j = 1$, when
$\nullp{j} \leq \alpha_j$ and we set $R_j=0$ otherwise.
%
%

As mentioned, we want to use adaptive MAB algorithms in each
experiment to test each hypothesis, since they can find a best arm among
$K+1$ with near-optimal sample complexity. However the traditional MAB
setup does not account for the asymmetry between the arms as is the
case in a testing setup, with one being the default (control) and
others being alternatives (treatments). This is the standard scenario
in A/B/n testing applications, as for example a company might prefer
wrong claims that the control is the best (false negative), rather
than wrong claims that an alternative is the best (false positive),
simply because new system-wide adoption of selected alternatives might
involve high costs.  What would be a suitable null hypothesis in this
hybrid setting?  To allow continuous monitoring, is it possible to define 
and compute always-valid $p$-values
that are super-uniformly distributed under the null hypothesis
when computed at any time $t$?
(This could be especially challenging  given that the number of samples 
from each the arm is random, and different for each arm.)

In addition to asymmetry, the practical scientist might have a different
incentive than the ideal outcome for MAB algorithms. In particular,
he/she might not want to find the best alternative if it is
not \emph{substantially} better than the control. Indeed, if the net
gain made by adopting a new alternative is small, it might be offset
by the cost of implementing the change from the existing default
choice.  By similar reasoning, we may not require identifying the
single best arm if there is a \emph{set} of arms with similar means
that are all larger than the rest.  

We propose a sensible null-hypothesis for each experiment which
incorporates the approximation and improvement notions as
described above and provide an always valid $p$-value which can be
easily calculated at each time step in the experiment. We show that a
slight modification of the usual LUCB algorithm caters to this
specific null-hypothesis while still maintaining near-optimal sample
complexity.

\vspace{-0.05in}
\subsubsection{Interaction between MAB and FDR}

In order to take advantage of the sample efficiency of best-arm bandit
algorithms, it is crucial to set the confidence levels close to what
is needed. Given a user-defined level $\alpha$, at each hypothesis
$j$, online FDR procedures automatically output the significance level
$\alpha_j$ which are ``needed'' to guarantee FDR control, based on
past decisions.

\begin{figure}[hbtp]
  \begin{center}
    \includegraphics[width=0.9\textwidth]{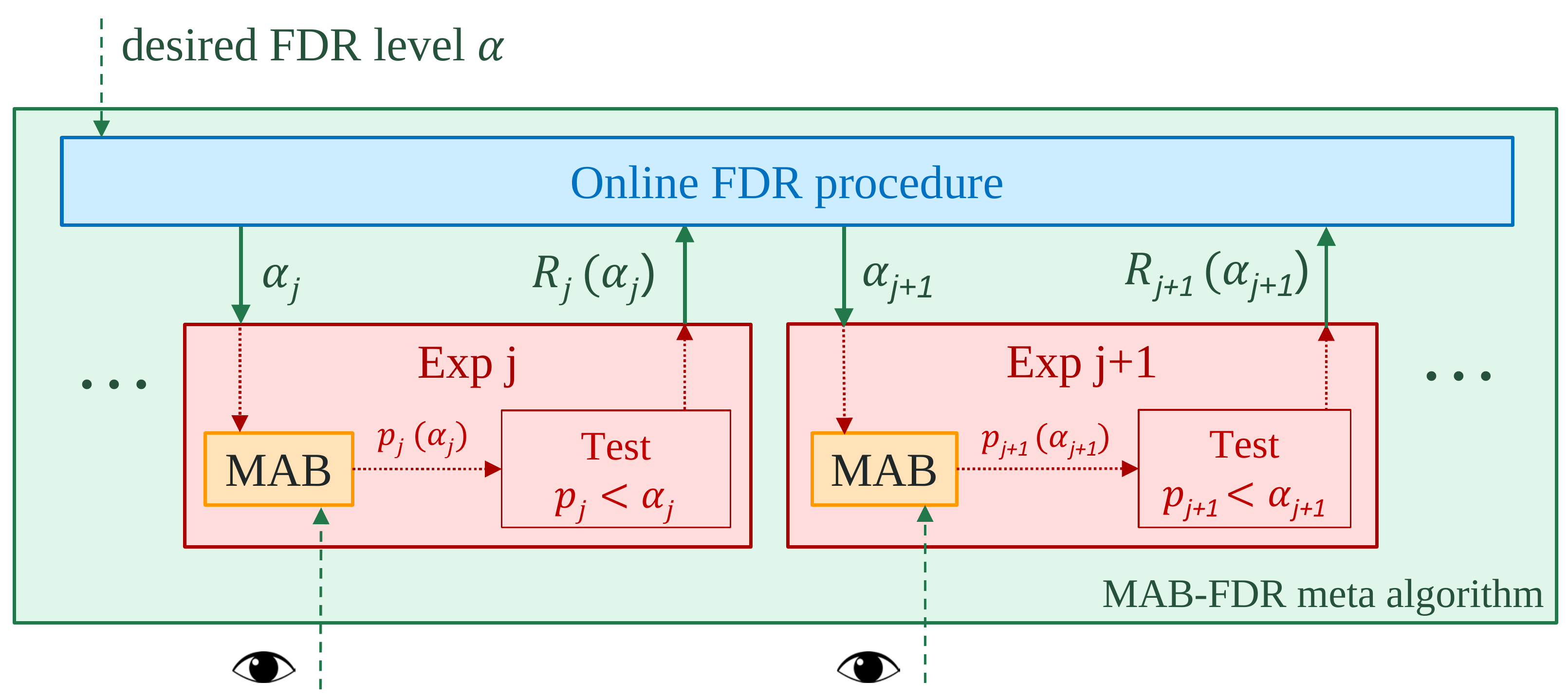}
    \end{center}
  \caption{Diagram of the MAB-FDR meta algorithm designed to achieve
    online FDR control along with near-optimal sample complexity. The
    green arrows symbolize interaction between the MAB and FDR
    procedures via the FDR test levels $\alpha_j$ and rejection
    indicator variables $R_j$. Notice that the $\nullp{j}$-values are
    now dependent as each $\alpha_j$ depends on $R_1, \dots,
    R_{j-1}$. The eyes represent possible continuous monitoring by the
    scientist.}
  \label{FigDiagram}
\end{figure}

Can we directly set the MAB confidence levels to the
output levels $\alpha_j$ from the online FDR procedure?  If we do, 
our $p$-values are not independent across different
hypotheses anymore: $\nullp{j}$ directly depends on the FDR levels $\alpha_j$
and each $\alpha_j$ in turn depends on past MAB rejections, thus on
past MAB $p$-values (see Figure~\ref{FigDiagram}). 
Does the new interaction compromise FDR guarantees?

Although known online FDR procedures~\cite{FS08,JM16} guarantee FDR
control for independent $p$-values, this does not hold for dependent
$p$-values in general. Hence FDR control guarantees cannot simply be
obtained out of the box. In particular, it is not a priori obvious
that the introduced dependence between the $p$-values does not cause
problems, i.e. violates necessary conditions for FDR control type
theorems.  A key insight that emerges from our analysis is that an
appropriate bandit algorithm actually shapes the $p$-value
distribution under the null in a “good” way that allows us to control
FDR.

\subsection{A meta-algorithm}
\label{SecCombineHypoMAB}

Procedure~\ref{ProcFramework} summarizes our doubly-sequential
procedure, with a corresponding flowchart in
Figure~\ref{FigDiagram}. We will prove theoretical guarantees after
instantiating the separate modules.  Note that our framework allows
the scientist to plug in their favorite best-arm MAB algorithm or
online FDR procedure. The choice for each of them determines which
guarantees can be proven for the entire setup. Any independent
improvement in either of the two parts would immediately lead to an
overall performance boost of the overall framework.

\vspace{-0.01in}
\begin{procedure}[h!]
\caption{MAB-FDR Meta algorithm skeleton}
\label{ProcFramework}
\begin{enumerate}
\item The scientist sets a desired FDR control rate $\alpha$.
\item For each $j = 1, 2, \dots$:
  \bcar
  \item Experiment $j$ receives a designated control arm and some
    number of alternative arms.
       \item An \emph{online-FDR procedure} returns an $\alpha_{j}$
         that is some function of the past values
         $\{P^{\ell}\}_{\ell=1}^{j-1}$.
    \item An \emph{MAB procedure} with inputs (a) the control arm and
      $K(j)$ alternative arms, (b) confidence level $\alpha_j$, and
      (c) (optional) a precision $\epsilon \geq 0$, is executed and if
      the procedure self-terminates, returns a recommended arm.
    \item Throughout the MAB procedure, an \emph{always valid
      $p$-value} is constructed continuously for each time $t$ using
      only the samples collected up to that time from the $j$-th
      experiment: for any $t$, it is a random variable $P^j_t \in
      [0,1]$ that is super-uniformly distributed whenever the
      control-arm is best.
    \item When the MAB procedure is terminated at time $t$ (either by
      itself or by a user-defined stopping criterion that may depend on
      $P^j_t$), if the arm with the highest empirical mean is
      \emph{not} the control arm and $P^j_t \leq \alpha_j$, then we
      return $P^j := P^j_t$, and the control arm
      is rejected in favor of this empirically best arm.
\ecar
\end{enumerate}
\vspace{-0.1in}
\end{procedure}


\section{A concrete procedure with guarantees}
\label{SecConcrete}

We now take the high-level road map given in
Procedure~\ref{ProcFramework}, and show that we can obtain a concrete,
practically implementable framework with FDR control and power
guarantees.  We first discuss the key modeling decisions we have to
make in order to seamlessly embed MAB algorithms into an online FDR
framework.  We then outline a modified version of a commonly used
best-arm algorithm, before we finally prove FDR and power guarantees
for the concrete combined procedure.

\subsection{Defining null hypotheses and constructing $p$-values} 
\label{SecNullhypPval}

Our first task is to define a null hypothesis for each experiment.  As
mentioned before, the choice of the null is not immediately obvious,
since we sample from \emph{multiple} distributions \emph{adaptively}
instead of independently.  In particular, we will generally not have
the same number of samples for all arms.
Given a distribution with default mean $\mu_0$ and alternative
distributions with means $\{\mu_i \}_{i=1}^K$, we propose that the
null hypothesis for the $j$-th experiment should be defined as
\begin{align}
\label{EqNullHyp}
\thenull^j: \mu_0 \geq \mu_i - \epsilon \quad \mbox{for all $i =
  1,\dots,\noarms$.}
\end{align}
In words, the null
corresponds to there being no alternative arm that is
$\epsilon$-better than the control arm.

It remains to define a $p$-value for each experiment that is
stochastically dominated by a uniform random variable under the null;
such a $p$-value is said to be \emph{superuniform}.  In order to simplify
notation below, we omit the index $j$ for the experiment and retain
only the index $i$ for the choice of arms.  In order to be able to use
a $p$-value at arbitrary times in the testing procedure and to allow
scientists to monitor the algorithm's progress in real time, it is
helpful to define an \emph{always valid $p$-value}, as previously
defined by Johari et al.~\cite{JPW15}. An always valid p-value is a
stochastic process $\{P_t\}_{t=1}^\infty$ such that for all fixed and
random stopping times $T$, under any distribution $\PP_0$ over the arm
rewards such that the null hypothesis is true, we have
\begin{align}
\label{EqnSuperuniPvalue}
\PP_0(P_T \leq \alpha) & \leq \alpha.
\end{align}
When all arms are drawn independently an equal number of times, 
by linearity of expectation one can regard the distance of each pair
of samples as a random variable drawn i.i.d. from a distribution with
mean $\tilde{\mu}_i\defn\mu_0 - \mu_i$. We can then view the problem as
testing the standard hypothesis $\thenull: \tilde{\mu}_i > -\epsilon$.
However, when the arms are pulled adaptively, a different solution
needs to be found---indeed, in this case, the sample means are
\emph{not unbiased estimators} of the true means, since the number of
times an arm was pulled now depends on the empirical means of all the
arms.

Our strategy is to construct always valid $p$-values by using the fact
that p-values can be obtained by inverting confidence intervals.  To
construct always-valid confidence bounds, we resort to the fundamental
concept of the law of the iterated logarithm (LIL), for which
non-asymptotic versions have been recently derived and used for both
bandits and testing problems (see \cite{JMNB14}, \cite{BR16}).

To elaborate, define the function
\begin{align}
\label{EqnDevDef}
\dev_{n}(\delta) = \sqrt{\frac{\log(\frac{1}{\delta}) +
    3\log(\log(\frac{1}{\delta})) + \frac{3}{2} \log(\log(\E n))}{n}}.
\end{align}
If $\widehat{\mu}_{i,n}$ is the empirical average of independent
samples from a sub-Gaussian distribution, then it is known (see, for
instance, ~\cite[Theorem 8]{KCG15}) that for all $\delta
\in (0,1)$, we have
\begin{align}
\label{eqn:lil_confidence}  
\max \Big\{ &\PP\Big( \bigcup_{n=1}^\infty \{ \widehat{\mu}_{i,n} -
\mu_i > \dev_n(\delta \wedge 0.1) \} \Big), \quad \PP\Big(
\bigcup_{n=1}^\infty \{ \widehat{\mu}_{i,n} - \mu_i < -\dev_n(\delta
\wedge 0.1) \} \Big) \Big\} \leq \delta,
\end{align}
where $\delta \wedge 0.1 \defn \min \{ \delta, 0.1 \}$.

We are now ready to propose single arm $p$-values of the form
\begin{align}
\label{EqnMultiP}
P_{i,t} :&= \sup \Big\{ \gamma \in [0,1] \; \mid \;
\:\widehat{\mu}_{i,n_i(t)} - \dev_{n_i(t)}(\tfrac{\gamma}{2\numarms})
\leq \: \widehat{\mu}_{0,n_0(t)} +\dev_{n_0(t)}(\tfrac{\gamma}{2}) +
\epsilon \Big \}\\
& = \sup \Big\{ \gamma \in [0,1] \; \mid \; \LCB_i(t) \leq \UCB_0(t) + \epsilon\Big\} \nonumber
\end{align}
Here we set $P_{i,t}=1$ if the supremum is taken over an empty set.
Given these single arm $p$-values, the always-valid $p$-value for the
experiment is defined as
\begin{align}
\label{EqnPVal}
P_t & \defn \min_{\tindex \leq t} \; \min_{i= 1, \ldots, \numarms}
P_{i,\tindex}.
\end{align}
We claim that this
procedure leads to an always valid $p$-value (with proof in Appendix~\ref{SecProofPropPVal}).
\begin{props}
\label{PropPVal}
The sequence $\{P_t \}_{t=1}^\infty$ defined via
equation~\eqref{EqnPVal} is an always valid $p$-value.
\end{props}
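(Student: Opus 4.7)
The plan is to prove the stronger statement that the \emph{ever} event $\{\exists\, t \geq 1,\, i \in \{1,\dots,\numarms\} : P_{i,t} \leq \alpha\}$ already has probability at most $\alpha$ under any null distribution $\PP_0$. Since $P_t = \min_{s \leq t} \min_i P_{i,s}$ is non-increasing in $t$, for any (possibly random) stopping time $T$ the event $\{P_T \leq \alpha\}$ is contained in this ever event, so~\eqref{EqnSuperuniPvalue} follows uniformly in $T$ without any delicate measure-theoretic argument about the stopping time itself.

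The next step is to translate the per-arm event $\{P_{i,t} \leq \alpha\}$ into a concrete deviation inequality. Because $\dev_n$ is strictly decreasing in its argument, the defining condition in~\eqref{EqnMultiP} becomes harder to satisfy as $\gamma$ grows; by monotonicity and continuity, $P_{i,t} \leq \alpha$ forces
\[
\widehat{\mu}_{i,n_i(t)} - \widehat{\mu}_{0,n_0(t)} \geq \dev_{n_i(t)}(\alpha/(2\numarms)) + \dev_{n_0(t)}(\alpha/2) + \epsilon.
\]
Subtracting $\mu_i - \mu_0 \leq \epsilon$, which is valid under the null~\eqref{EqNullHyp}, yields
\[
(\widehat{\mu}_{i,n_i(t)} - \mu_i) - (\widehat{\mu}_{0,n_0(t)} - \mu_0) \geq \dev_{n_i(t)}(\alpha/(2\numarms)) + \dev_{n_0(t)}(\alpha/2),
\]
and splitting this sum into its two summands gives the containment
\[
\{P_{i,t} \leq \alpha\} \subseteq \{\widehat{\mu}_{i,n_i(t)} - \mu_i \geq \dev_{n_i(t)}(\alpha/(2\numarms))\} \cup \{\widehat{\mu}_{0,n_0(t)} - \mu_0 \leq -\dev_{n_0(t)}(\alpha/2)\},
\]
since if both deviations were strictly below their thresholds, their sum could not meet the required bound.

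Finally I would take unions over $t \geq 1$ and $i \in \{1,\dots,\numarms\}$ and apply the LIL bound~\eqref{eqn:lil_confidence}. Because $n_i(t) \in \NN$, the adaptive-index union $\bigcup_{t \geq 1} \{\widehat{\mu}_{i,n_i(t)} - \mu_i \geq \dev_{n_i(t)}(\delta)\}$ is contained in the deterministic-index union $\bigcup_{n \geq 1} \{\widehat{\mu}_{i,n} - \mu_i \geq \dev_n(\delta)\}$ by coupling the adaptive process to an underlying i.i.d. reward sequence per arm; the latter event is bounded by $\delta$ through~\eqref{eqn:lil_confidence}. Choosing $\delta = \alpha/(2\numarms)$ for each of the $\numarms$ alternative arms and $\delta = \alpha/2$ for the control and summing via a union bound gives total probability at most $\numarms \cdot \alpha/(2\numarms) + \alpha/2 = \alpha$, as required. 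The main subtlety is precisely this handling of adaptive, data-dependent sample counts $n_i(t)$, which is why a uniform-in-$n$ LIL-type confidence bound is essential: a fixed-$n$ Hoeffding bound would not suffice to control the running minimum $P_t$ across all times $t$.
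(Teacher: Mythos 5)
Your proposal is correct and follows essentially the same route as the paper's proof: reduce the claim for arbitrary stopping times to the ``ever'' event $\bigcup_i \bigcup_t \{P_{i,t} \leq \gamma\}$, use the null condition $\mu_i \leq \mu_0 + \epsilon$ to show this event forces an upward deviation of some $\widehat{\mu}_{i,n_i(t)}$ beyond $\dev_{n_i(t)}(\gamma/(2\numarms))$ or a downward deviation of $\widehat{\mu}_{0,n_0(t)}$ beyond $\dev_{n_0(t)}(\gamma/2)$, and then apply the uniform-in-$n$ LIL bound with a union bound to get $\numarms\cdot\tfrac{\gamma}{2\numarms} + \tfrac{\gamma}{2} = \gamma$. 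Your explicit handling of the data-dependent sample counts $n_i(t)$ (by embedding the adaptive union into the deterministic union over $n \in \NN$) makes precise a step the paper leaves implicit, but the argument is the same.
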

\noindent 
See Section~\ref{SecProofPropPVal} for the proof of this proposition.


\subsection{Adaptive sampling for best-arm identification}
\label{SecBestarm}

In the traditional A/B testing setting described in the introduction,
samples are allocated uniformly to the different alternatives.  But by
allocating different numbers of samples to the alternatives,
decisions can be made with the same statistical significance using far fewer
samples.
Suppose moreover that there is a unique maximizer
$\realbest \defn \arg \max \limits_{i=0,1,\dots,K} \mu_i$, so that
\begin{align*}
  \Delta_i \defn \mu_{\realbest} - \mu_i > 0 \qquad \mbox{for all $i
    \neq \realbest$.}
\end{align*}
Then for any $\delta \in (0,1)$, best-arm identification algorithms
for the multi-armed bandit problem can identify $\realbest$ with
probability at least $1-\delta$ based on at most\footnote{Here we have
  ignored some doubly-logarithmic factors.}
$\sum_{i \neq \realbest} \Delta_i^{-2} \log(1/\delta)$ total samples
(see the paper~\cite{jamieson2014best} for a brief survey and
\cite{villar15} for an application to clinical trials).  In contrast,
if samples are allocated \emph{uniformly} to the alternatives under
the same conditions, then the most natural procedures require
$K \max \limits_{i \neq \realbest} \Delta_i^{-2} \log(K/\delta)$
samples before returning $\realbest$ with probability at least
$1-\delta$.

However, standard best-arm bandit algorithms do not incorporate
asymmetry as induced by null-hypotheses as in
definition~\eqref{EqNullHyp} by default.  Furthermore, recall that a
practical scientist might desire the ability to incorporate
approximation and a minimum improvement requirement. More precisely,
it is natural to consider the requirement that the returned arm
$\Output$ satisfies the bounds $\mu_{\Output} \geq \mu_0 + \eps$ and
$\mu_{\Output} \geq \mu_{\realbest} - \eps$ for some $\eps > 0$.  For
those readers unfamiliar with best-arm MAB algorithms, it is likely
helpful to first grasp the entire framework in the special $\eps = 0$
throughout, before understanding it in full generality with the
complications introduced by setting $\eps>0$.  In the following we
present a modified MAB algorithm based on the common LUCB algorithm
(see~\cite{KTAS12,SJR17}).


\begin{algorithm}[ht!]
\caption{Best-arm identification with a control arm for confidence
  $\delta$ and precision $\epsilon \geq 0$}
\label{AlgoModLUCB}
For all $t$ let $n_i(t)$ be the number of times arm $i$ has been
pulled up to time $t$. In addition, for each arm $i$ let $\empmu_i(t)
= \frac{1}{n_i(t)}\sum_{\tau = 1}^{n_i(t)} r_i(\tau)$, define
\vspace{-0.03in}
\begin{align*}
\LCB_i(t) &:= \widehat{\mu}_{i,n_i(t)} -
\dev_{n_i(t)}(\tfrac{\delta}{2K}) \qquad \mbox{and} \qquad \UCB_i(t) :=
\widehat{\mu}_{i,n_i(t)} + \dev_{n_i(t)}(\tfrac{\delta}{2}).
\end{align*} 
\begin{enumerate}
\item Set $ t=1$ and sample every arm once.
\item 
   Repeat: Compute $h_t = \arg \max \limits_{i=0,1,\dots,K}
   \widehat{\mu}_i(t)$, and $\ell_t = \arg \max
   \limits_{i=0,1,\dots,K,  i \neq h_t} \UCB_i(t)$

  \begin{enumerate}[(a)]
  \item  If $\LCB_{0}(t) > \UCB_{i}(t) - \eps$, for all $i \neq 0$, then output $0$ and terminate.\\ 
    Else if $\LCB_{h_t}(t) > \UCB_{\ell_t}(t) - \eps$ and
    $\LCB_{h_t}(t) > \UCB_{0}(t) + \eps$, then output
    $h_t$ and terminate.
  \item If $\epsilon >0$, let $u_t = \arg \max_{i\neq 0} \UCB_i(t)$ and pull all distinct arms in $\{0, u_t, h_t, \ell_t\}$ once. \\
      If $\epsilon = 0$, pull arms $h_t$ and $\ell_t$ and set $t= t+1$.

  \end{enumerate}
\end{enumerate}
\end{algorithm}


Inside the loop of Algorithm~\ref{AlgoModLUCB}, we use $h_t \in \{0,1,\dots,K\}$
to denote the current empirically-best arm, $\ell_t$ to denote the
most promising contender among the other arms that has not yet been
sampled enough to be ruled out.  The parameter $\epsilon\geq 0$ is a
slack variable, and the algorithm is easiest to first understand when
$\epsilon =0$.  We provide a visualization of how $\epsilon$ affects
the stopping condition in Figure~\ref{fig:LUCBEps}.  Step (a) checks
if $h_t$ is within $\epsilon$ of the true highest mean, and if it is
also at least $\epsilon$ greater than the true mean of the control arm
(or is the control arm), terminates with this arm $h_t$.  Step (b)
ensures that the control arm is sufficiently sampled when $\epsilon >
0$.  Step (c) pulls $h_t$ and $\ell_t$, reducing the overall
uncertainty in the difference between their two means.  


\begin{figure}[b!]
  \begin{tabular}{c c}
  \includegraphics[width=0.42\textwidth,trim={0 0 0 0},clip]{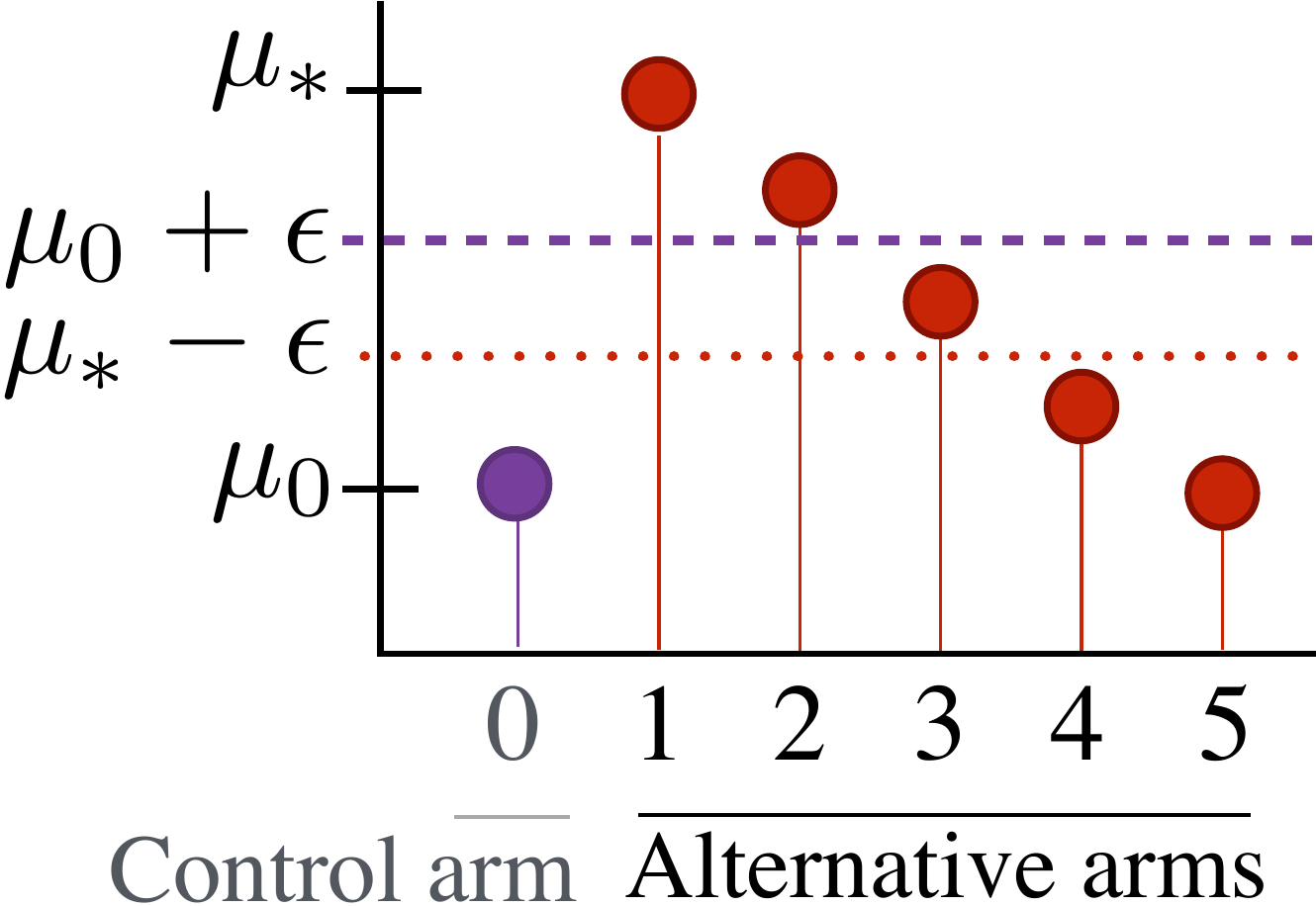} & \hspace{.5cm}
  \includegraphics[width=0.42\textwidth]{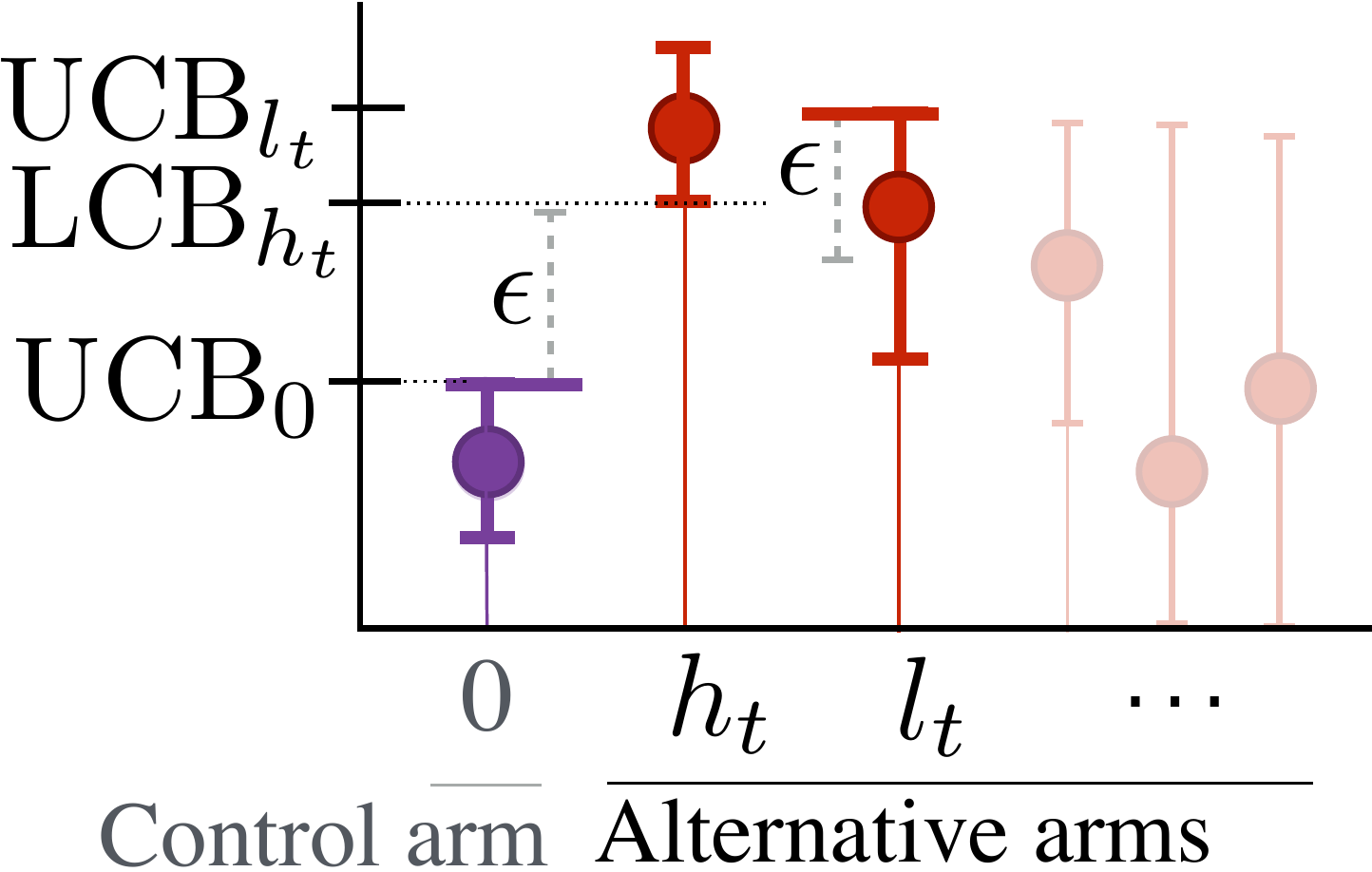} \\
  (a) & (b)
  \end{tabular}
  \caption{(a) The means of arms $\{1,2,3\}$ are within $\epsilon$ of
    the best arm, but only arms $\{1,2\}$ are at least $\epsilon$
    better than the control arm 0. Thus, returning any of arms
    $\{3,4,5\}$ would result in a false discovery when $\epsilon
    >0$. (b) An example of the stopping condition being critically met
    and returning a non-control arm $h_t$. While $\LCB_{h_t} >
    \UCB_{\ell_t} - \epsilon$ is satisfied with some slack, $\LCB_{h_t} >
    \UCB_{0} + \epsilon$ is just barely satisfied.}
  \label{fig:LUCBEps}
\end{figure}

The following proposition applies to Algorithm~\ref{AlgoModLUCB} run
with a control arm indexed by $i = 0$ with mean $\mu_0$ and
alternative arms indexed by $i=1,\dots,K$ with means $\mu_i$,
respectively.  Let $\Output$ denote the random arm returned by the
algorithm assuming that it exits, and define the set
\vspace{-0.03in}
\begin{align}
 \label{EqnDefnSetStar}
  \SetStar & \defn \{ \realbest \neq 0 \mid \mu_{\realbest} \geq \max
  \limits_{i=1,\dots,K} \mu_i - \epsilon \quad \mbox{and} \quad
  \mu_{\realbest} > \mu_0 + \epsilon \}.
\end{align}
Note that the mean associated with any index $\realbest \in \SetStar$,
assuming that the set is non-empty, is guaranteed to be
$\epsilon$-superior to the control mean, and at most
$\epsilon$-inferior to the maximum mean over all arms.

\begin{props}
  \label{PropLUCBEps}
  The algorithm~\ref{AlgoModLUCB} terminates in finite time with probability one.
Furthermore, suppose that the samples from each arm are independent and
sub-Gaussian with scale $1$.  Then for any $\delta \in (0,1)$ and
$\epsilon \geq 0$, Algorithm~\ref{AlgoModLUCB} has the following
guarantees:
\begin{enumerate}[(a)]
\item Suppose that $\mu_0 > \max \limits_{i=1,\dots,K} \mu_i -
  \epsilon$.  Then with probability at least $1-\delta$, the algorithm exits with $\Output = 0$ after taking at most $O\left(\sum_{i=0}^K \effgap_i^{-2} \log(K \log(\effgap_i^{-2})/\delta) \right)$ time steps with effective gaps 
  \begin{align*}
  \effgap_0 &= (\mu_0 + \epsilon) - \max \limits_{j=1,\dots,K} \mu_j  \: \text{ and} \\
  \effgap_i &=  (\mu_0 + \epsilon) -  \mu_i .
  \end{align*}
\item Otherwise, suppose that the 
  set $\SetStar$ as defined in equation~\eqref{EqnDefnSetStar} is non-empty.  Then with probability  at least $1-\delta$, the algorithm exits with $\Output \in \SetStar$ after taking at most \\$O\left(\sum_{i=0}^K \effgap_i^{-2} \log(K \log(\effgap_i^{-2})/\delta) \right)$ time steps with effective gaps
  \begin{align*}
  \effgap_0   &= \min\left\{ \max \limits_{j=1,\dots,K} \mu_j - (\mu_0+\epsilon), \max\{ \Delta_0, \epsilon \} \right\} \: \text{ and}\\
  \effgap_i &= \max\left\{ \Delta_i ,  \min\left\{ \max \limits_{j=1,\dots,K} \mu_j - (\mu_0+\epsilon), \epsilon \right\}  \right\}.
  \end{align*}
\end{enumerate}
\end{props}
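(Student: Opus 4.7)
The plan is to condition on a single high-probability ``good event'' and carry out all the main analysis deterministically on that event. Define $\mathcal{E}$ as the event that, for every arm $i \in \{0,1,\dots,K\}$ and every $n \geq 1$, $\widehat{\mu}_{i,n} - \dev_n(\delta/(2K)) \leq \mu_i \leq \widehat{\mu}_{i,n} + \dev_n(\delta/2)$, so that the LCBs and UCBs used in Algorithm~\ref{AlgoModLUCB} sandwich the true means. Applying the LIL-type bound~\eqref{eqn:lil_confidence} to each arm (with parameter $\delta/(2K)$ for the LCB direction and $\delta/2$ for the UCB direction) and union-bounding over the $K+1$ arms yields $\PP(\mathcal{E}) \geq 1 - \delta$. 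The remainder of the argument runs on $\mathcal{E}$.

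Correctness follows immediately from the stopping rules. In case (a), the hypothesis $\mu_0 > \max_i \mu_i - \epsilon$ combined with the sandwich bounds rules out ever having $\LCB_{h_t} > \UCB_0 + \epsilon$ for $h_t \neq 0$; therefore the only way the algorithm can exit is via the first clause, with output $0$. In case (b), the first stopping clause would force $\mu_0 \geq \max_j \mu_j - \epsilon$, which contradicts $\SetStar \neq \emptyset$ and hence cannot fire; whenever the second clause fires with some $h_t \neq 0$, it forces $\mu_{h_t} \geq \max_j \mu_j - \epsilon$ and $\mu_{h_t} > \mu_0 + \epsilon$, so $h_t \in \SetStar$.

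For the sample complexity I would define, for each arm $i$, a threshold $N_i = \min\{n : 2\,\dev_n(\delta/(2K)) \leq \effgap_i\}$ using the $\effgap_i$ from the proposition; inverting equation~\eqref{EqnDevDef} gives $N_i = O\!\left(\effgap_i^{-2}\log(K\log(\effgap_i^{-2})/\delta)\right)$. The core deterministic claim is that, on $\mathcal{E}$, once every arm $i$ has been pulled at least $N_i$ times one of the two stopping clauses must fire. This follows from a case analysis on which arm currently plays the role of $h_t$, $\ell_t$, or (when $\epsilon > 0$) $u_t$: an arm stays eligible for one of these roles only while its confidence width exceeds the gap to the comparison threshold built into the relevant clause, and $2\,\dev_{N_i}(\cdot) \leq \effgap_i$ forces the required separation. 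Since each round pulls a constant number of arms, summing over $i$ produces the stated $O(\sum_i N_i)$ bound. Termination with probability one (even off $\mathcal{E}$) is a short separate argument using that $\dev_n \to 0$ while the empirical means concentrate.

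The main obstacle is deriving the effective-gap formulas in part (b). Because of the asymmetry between the control and the alternatives and the slack $\epsilon$, each arm $i$ can leave the candidate pool in two distinct ways---by being separated from the current empirical best $h_t$, or by jointly being separated from both $h_t$ and the control arm $0$ in the right direction---and one has to take the \emph{easier} of the two to obtain the tight complexity; this is the source of the $\min$ and $\max$ in the stated $\effgap_i$. Careful bookkeeping, tracking how the roles $h_t$, $\ell_t$, and $u_t$ can shift among arms as samples accumulate, is the main technical work, and the cleanest way I would organize it is to show that no arm can be pulled more than $O(N_i)$ times by identifying, for each arm and each possible role, the clause that must be violated just before its $(N_i{+}1)$-th pull.
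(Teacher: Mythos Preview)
Your high-level plan---condition on a good event, read off correctness from the stopping rules, then bound sample complexity via per-arm thresholds $N_i$---matches the paper's outline, and your correctness paragraph is essentially correct. There are, however, two concrete problems.

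First, the union bound for $\mathcal{E}$ does not give $\PP(\mathcal{E}) \geq 1-\delta$: the $\UCB$ side uses parameter $\delta/2$, and union-bounding that one direction alone over the $K{+}1$ arms already costs $(K{+}1)\delta/2 = \Theta(K\delta)$. The paper avoids this by working on the symmetric event $\mathcal{E}_i = \bigcap_{n\ge 1}\{|\widehat\mu_{i,n}-\mu_i|\le \dev_n(\tfrac{\delta}{2K})\}$, which costs $O(\delta/K)$ per arm in \emph{both} directions and so union-bounds to $O(\delta)$. On $\bigcap_i\mathcal{E}_i$ the paper only uses the consequences $\UCB_i(t) \le \mu_i + 2\dev_{n_i(t)}(\tfrac{\delta}{2K})$ and $\LCB_i(t)\ge \mu_i - 2\dev_{n_i(t)}(\tfrac{\delta}{2K})$; the exact sandwich $\LCB_i\le\mu_i\le\UCB_i$ is never invoked in the sample-complexity argument.

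Second, and more seriously, the step ``once every arm has $N_i$ pulls the algorithm stops, hence summing gives $O(\sum_i N_i)$'' is a non-sequitur, and the fallback claim ``no arm is pulled more than $O(N_i)$ times'' is false. The arm playing $h_t$ is pulled in every round regardless of its confidence width, so once $\realbest$ separates and becomes $h_t$ for most of the run, $n_{\realbest}(T)$ is of order $\sum_j N_j$, not $N_{\realbest}$. Your assertion that ``an arm stays eligible for a role only while its width exceeds the gap'' is simply not true for the $h_t$ role. For part~(a) the paper's argument is still close in spirit to yours: arm $0$ is pulled every round, and any $i\ne0$ with $n_i(t)\ge\tau_i$ can no longer be $\arg\max_{j\ne0}\UCB_j$, so after $\sum_i\tau_i$ rounds the first clause must hold. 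For part~(b), however, the paper takes a genuinely different route. It proves a counting lemma stating that, after a burn-in of $2\sum_i\tau_i$ rounds, the number of ``bad'' times $s$ with $\LCB_{h_s}(s)<\mu_{\realbest}-\tfrac{5}{2}u\epsilon$ or $\UCB_{\ell_s}(s)>\mu_{\realbest}+u\epsilon$ is fewer than $\sum_i\tau_i$, where the slack $u$ is chosen proportional to $\mu_{\realbest}-(\mu_0+\epsilon)$ so that on any ``good'' time \emph{both} halves of the second stopping clause hold simultaneously. The proof of this lemma is a chain of sub-propositions that separately track rounds with $h_t=\realbest$ versus $h_t\ne\realbest$; it is this good/bad-time accounting---not a per-arm pull bound---that ultimately produces the $\min/\max$ structure in the stated $\effgap_i$. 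A direct ``for each arm and each role'' case analysis cannot work as stated and would need to be reorganized around a charging argument of this kind.
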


\noindent See Section~\ref{SecProofPropLUCBEps} for the proof of this
claim.  Part (a) of Proposition~\ref{PropLUCBEps} guarantees that when
no alternative arm is $\epsilon$-superior to the control arm
(i.e. under the null hypothesis), the algorithm stops and returns the
control arm after a certain number of samples with probability at
least $1 - \delta$, where the sample complexity depends on
$\epsilon$-modified gaps between the means $\mu_0$ and $\mu_i$.  Part
(b) guarantees that if there is in fact at least one alternative that
is $\epsilon$-superior to the control arm (i.e. under the
alternative), then the algorithm will find at least one of them that
is at most $\epsilon$-inferior to the best of all possible arms with
the same sample complexity and probability.

Note that the required number of samples
$O\left(\sum_{i=0}^K \effgap_i^{-2} \log(K
  \log(\effgap_i^{-2})/\delta) \right)$ in
Proposition~\ref{PropLUCBEps} is comparable, up to log factors, with
the well-known results in \cite{KTAS12,SJR17} for the case
$\epsilon = 0$, with the modified gaps $\effgap_i$ replacing $\Delta_i = \mu_{\realbest}- \mu_i$.
Indeed, the
nearly optimal sample complexity result of \cite{SJR17}
implies that
the algorithm terminates under settings (a) and (b) after at most
$O(\max_{j \neq i_\star} \Delta_{j}^{-2} \log( K
\log(\Delta_{j}^{-2})/\delta) + \sum_{i \neq i_\star} \Delta_{i}^{-2}
\log( \log(\Delta_{i}^{-2})/\delta)$) samples are taken.

In our development to follow, we now bring back the index for
experiment $j$, in particular using $P^j$ to denote the quantity
$P^j_T$ at any stopping time $T$.  Here the stopping time can either
be defined by the scientist, or in an algorithmic manner.


\subsection{Best-arm MAB interacting with online FDR}

After having established null hypotheses and $p$-values in the context
of best-arm MAB algorithms, we are now ready to embed them into an
online FDR procedure.  In the following, we consider $p$-values for
the $j$-th experiment $\nullp{j} \defn \nullp{j}_{\rowstop_j}$ which
is just the $p$-value as defined in equation~\eqref{EqnPVal} at the
stopping time $\rowstop_j$, which depends on $\alpha_j$.

We denote the set of true null and false null hypotheses up to
experiment $J$ as $\truenulls(J)$ and $\falsenulls(J)$ respectively,
where we drop the argument whenever it's clear from the context.
The variable $R_j = \Indi_{P^j \leq \alpha_j}$
indicates whether a the null hypothesis of experiment
$j$ has been rejected, where $R_j=1$ denotes a claimed discovery
that an alternative was better than the control.  
The false discovery
rate (FDR) and modified FDR \emph{up to experiment $\numexp$} are then
defined as
\vspace{-0.03in}
\begin{align}
\FDR(\numexp) & \defn \EE \frac{\sum_{j\in \truenulls}
  \rej_j}{\sum_{i=1}^\numexp \rej_i \vee 1} \qquad \text{ and } \qquad
\mFDR(\numexp) \defn \frac{\EE \sum_{j\in \truenulls} \rej_j}{\EE
  \sum_{i=1}^\numexp \rej_i + 1}.
\end{align}
Here the expectations are taken with respect to distributions of the
arm pulls and the respective sampling algorithm.  In general, it is
not true that control of one quantity implies control of the other.
Nevertheless, in the long run (when the law of large numbers is a good
approximation), one does not expect a major difference between the two
quantities in practice.

The set of true nulls $\truenulls$ thus includes all experiments where
$\thenull^j$ is true, and the $\FDR$ and $\mFDR$ are well-defined for any
number of experiments $\numexp$, since we often desire to
control $\FDR(J)$ or $\mFDR(J)$ for all $J \in \NN$.  In order to
measure power, we define the \emph{$\eps$-best-arm discovery rate} as
\begin{align}
\label{EqBDRDef}
 \epsBDR (\numexp) & \defn \frac{\EE \sum_{j\in \falsenulls} \rej_j
   \Indi_{\mu_{\Output} \geq \mumax -\eps }\Indi_{\mu_{\Output}
     \geq \mu_0 + \eps}}{|\falsenulls(\numexp)|}
\end{align}

\vspace{-0.05in} We provide a concrete procedure~\ref{ProcLORD} for
our doubly sequential framework, where we use a particular online FDR
algorithm due to Javanmard and Montanari~\cite{JM16} known as LORD;
the reader should note that other online FDR procedure could be used
to obtain essentially the same set of guarantees.  Given a desired
level $\alpha$, the LORD procedure starts off with an initial
``$\alpha$-wealth'' of $\wlth(0) < \alpha$.  Based on a inifinite
sequence $\{\gamma_i\}_{i=1}^\infty$ that sums to one, and the time of
the most recent discovery $\tau_j$, it uses up a fraction
$\gamma_{j-\tau_j}$ of the remaining $\alpha$-wealth to test.
Whenever there is a rejection, we increase the $\alpha$-wealth by
$\alpha - \wlth(0)$.  A feasible choice for a stopping time in
practice is
\mbox{$\rowstop_j \defn \min\{ \rowstop(\alpha_j), \trunctime\}$,}
where $\trunctime$ is a maximal number of samples the scientist wants
to pull and $\rowstop(\alpha_j)$ is the stopping time of the best-arm
MAB algorithm run at confidence $\alpha_j$.


\begin{procedure}[htbp]
\caption{MAB-LORD: best-arm identification with online FDR control}
\label{ProcLORD}
\begin{enumerate}
\item Initialize $\wlth(0) < \alpha$, set $\tau_0=0$, and choose a
  sequence $\{\gamma_i\}$ s.t. $\sum_{i=1}^\infty \gamma_i = 1$
\item At each step $j$, compute $\alpha_j = \gamma_{j-\tau_j}
  \wealth(\tau_j)$ and \\ $\wealth(j+1) = \wealth(j) - \alpha_j +
  \rej_j (\alpha - \wlth(0))$

\item Output $\alpha_j$ and run Algorithm~\ref{AlgoModLUCB} using
  $\putinalpha$-confidence and stop at a stopping time $\rowstop_j$.
\item Algorithm~\ref{AlgoModLUCB} returns $\nullp{j}$ and we reject the null
  hypothesis if $\nullp{j} \leq \alpha_j$.
\item Set $\rej_j = \Indi_{\nullp{j}\leq \alpha_j}, \tau_j =
  \tau_{j-1} \vee jR_j$, update $j = j+1$ and go back to step 2.
\end{enumerate}
\end{procedure}

\noindent The following theorem provides guarantees on $\mFDR$ and power 
for the MAB-LORD procedure.

\begin{theos}[Online mFDR control for MAB-LORD]
\
\label{ThmOnlinemFDR}
\vspace{-0.07in}
\begin{enumerate}[(a)]
\item Procedure~\ref{ProcLORD} achieves mFDR control at level $\alpha$
  for stopping times \mbox{$\rowstop_j =\min\{ \rowstop(\putinalpha),
    \trunctime\}$.}
\item Furthermore, if we set $\trunctime = \infty$,
  Procedure~\ref{ProcLORD} satisfies
\vspace{-0.05in}
\begin{align}
\epsBDR(\numexp) \geq \frac{\sum_{j=1}^\numexp \Indi_{j\in
    \falsenulls} (1- \alpha_{j})}{|\falsenulls(\numexp)|}.
\end{align}
\end{enumerate}
\end{theos}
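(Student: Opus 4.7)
My plan is to handle the two parts separately, with part (a) being the more delicate of the two because of the interlocking dependence between the $\alpha_j$'s and the $P^j$'s.

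For part (a) (mFDR control), the central observation is that although the $p$-values $P^j$ are \emph{not} independent across experiments, they satisfy a conditional super-uniformity property that is just as useful for the LORD wealth analysis. Specifically, the samples used to build $P^j$ come only from experiment $j$ and are therefore independent of $\mathcal{F}_{j-1}$, the $\sigma$-algebra generated by previous experiments; but $\alpha_j$ and the stopping time $T_j$ (including the truncation at $\trunctime$) are $\mathcal{F}_{j-1}$-measurable together with a rule possibly depending on $\{P^j_t\}_t$. By Proposition~\ref{PropPVal}, $\{P^j_t\}_t$ is always valid, so applying~\eqref{EqnSuperuniPvalue} conditionally yields, for every $j \in \truenulls$, $\PP(P^j \leq \alpha_j \mid \alpha_j, \mathcal{F}_{j-1}) \leq \alpha_j$, hence $\EE[R_j] \leq \EE[\alpha_j]$. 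Summing over $j \in \truenulls(J)$ gives $\EE[\sum_{j \in \truenulls} R_j] \leq \EE[\sum_{j=1}^J \alpha_j]$.

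Next I invoke the standard LORD wealth invariant. By induction on $j$ using the wealth update $\wealth(j+1) = \wealth(j) - \alpha_j + R_j(\alpha - W_0)$ and the fact that between consecutive rejections the spending telescopes to at most $\wealth(\tau_j)$, one obtains the deterministic bound $\sum_{j=1}^J \alpha_j \leq W_0 + (\alpha - W_0)\sum_{j=1}^J R_j$. Since $W_0 \leq \alpha$, this right-hand side is at most $\alpha\bigl(1 + \sum_{j=1}^J R_j\bigr)$. Taking expectations and combining with the previous display gives
\begin{align*}
\EE\Big[\sum_{j \in \truenulls} R_j\Big] \leq \alpha\Big(1 + \EE\Big[\sum_{j=1}^J R_j\Big]\Big),
\end{align*}
which is exactly $\mFDR(J) \leq \alpha$.

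For part (b), I would use Proposition~\ref{PropLUCBEps}(b) directly. For any $j \in \falsenulls$ the set $\SetStar$ is non-empty, so with confidence parameter $\alpha_j$ and with $\trunctime = \infty$ (so the algorithm runs to self-termination), the MAB exits returning $\Output \in \SetStar$ with probability at least $1 - \alpha_j$ conditional on $\alpha_j$. When this event occurs, $\Output \neq 0$ and the very stopping criterion $\LCB_{h_t}(t) > \UCB_0(t) + \epsilon$ (evaluated at confidence $\alpha_j$) forces $P_{\Output,T_j} \leq \alpha_j$, so $P^j \leq \alpha_j$ and $R_j = 1$; moreover the defining properties of $\SetStar$ in~\eqref{EqnDefnSetStar} give $\mu_\Output \geq \mumax - \epsilon$ and $\mu_\Output \geq \mu_0 + \epsilon$. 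Taking conditional expectation given $\alpha_j$, summing over $j \in \falsenulls(J)$, and dividing by $|\falsenulls(J)|$ yields the claimed lower bound on $\epsBDR(J)$.

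The main subtlety, and the step I would be most careful with, is the conditional super-uniformity argument in part (a): one must verify that the randomness in $\alpha_j$ and in the stopping time $T_j$ (which may depend on the realized process $\{P^j_t\}$) is handled correctly by the always-valid property of Proposition~\ref{PropPVal}. The routine calculations (the LORD wealth telescoping and the plug-in from Proposition~\ref{PropLUCBEps}) are straightforward once this conditional independence/measurability skeleton is in place.
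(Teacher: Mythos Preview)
Your proposal is correct and follows the same high-level route as the paper: verify that the $p$-values are conditionally super-uniform given $\mathcal{F}^{j-1}$, then invoke the LORD/alpha-investing mFDR guarantee; for part (b), apply Proposition~\ref{PropLUCBEps}(b) and read off the $p$-value from the termination condition.

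Two minor differences are worth noting. First, for the conditional super-uniformity in part (a) you rely solely on the always-valid property (Proposition~\ref{PropPVal}) applied conditionally on $\alpha_j$, observing that $T_j=\min\{T(\alpha_j),M\}$ is then an ordinary stopping time in the filtration of experiment $j$; this is clean and sufficient. The paper instead treats $T(\alpha_j)$ separately via Proposition~\ref{PropLUCBEps}(a) (arguing that under the null the algorithm returns arm $0$ with probability $\geq 1-\alpha_j$, forcing $P^j=1$) and then combines the two cases $\{T(\alpha_j)\le M\}$ and $\{T(\alpha_j)>M\}$. Your route avoids this case split. Second, you spell out the LORD wealth telescoping $\sum_{j\le J}\alpha_j \le W_0 + (\alpha-W_0)\sum_{j\le J}R_j \le \alpha(1+\sum_{j\le J}R_j)$ explicitly, whereas the paper simply cites the generalized alpha-investing literature for this step. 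Both executions are valid; yours is a bit more self-contained.
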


The proof of this theorem can be found in Section~\ref{SecProofThm1}.
Note that by the arguments in the proof of
Theorem~\ref{ThmOnlinemFDR}, mFDR control itself is actually
guaranteed for any generalized $\alpha$-investing
procedure~\cite{AR14} combined with any best-arm MAB algorithm. In
fact we could use any adaptive stopping time $T_j$ which depend on the
history only via the rejections $R_1,\dots, R_{j-1}$. Furthermore,
using a modified LORD proposed by Javanmard and Montanari~\cite{JM15},
we can also guarantee FDR control-- which can be found in
Appendix~\ref{SecFDRControl}.

It is noteworthy that small values of $\alpha$ do not only guarantee
smaller $\FDR$ error but also higher $\BDR$.  However, there is no
free lunch --- a smaller $\alpha$ implies a smaller $\alpha_j$ at each
experiment, which in turn causes the best-arm MAB algorithm to employ a
larger number of pulls in each experiment.

\vspace{-0.2in}
\section{Experimental results}
\label{SecExperiments}

In the following, we describe the results of experiments \footnote{The
  code for reproducing all experiments and plots in this paper is
  publicly available at \texttt{https://github.com/fanny-yang/MABFDR}}
on both simulated and real-world data sets to illustrate the
properties and guarantees of our procedure described in
Section~\ref{SecConcrete}.  In particular, we show that the mFDR is
indeed controlled over time and that MAB-FDR (used interchangeably
with MAB-LORD here) is highly advantageous in terms of sample
complexity and power compared to a straightforward extension of A/B
testing that is embedded in online FDR procedures. Unless otherwise
noted, we set $\epsilon =0$ in all of our simulations to focus on the
main ideas and keep the discussion concise.

There are two natural frameworks to compare against MAB-FDR. 
The first, called \fABFDR{} or AB-LORD, swaps the MAB part for an A/B (i.e. A/B/n) test
 (uniformly sampling all alternatives until termination).
 The second comparator swaps
the online FDR control for independent testing at $\alpha$
for all hypotheses  -- we call this \fMABIND{}.
Formally, \fABFDR{} swaps step 3 in Procedure~\ref{ProcLORD} with
``\emph{Output $\alpha_j$ and uniformly sample each arm until stopping
  time $\rowstop_j$.}'' while \fMABIND{} swaps step 4 in
Procedure~\ref{ProcLORD} with ``\emph{The algorithm returns
  $\nullp{j}$ and we reject the null hypothesis if $\nullp{j} \leq
  \alpha$.}''.  In order to compare the performances of these
procedures, we ran three sets of simulations using
Procedure~\ref{ProcLORD} with $\epsilon = 0$ and $\gamma_j = 0.07
\:\frac{\log (j \vee 2)}{j \E^{\sqrt{\log j}}}$ as in~\cite{JM16}.
The first two sets are on artificial data (Gaussian and Bernoulli
draws from sets of randomly drawn means $\mu_i$), while the third is
based on data from the New Yorker Cartoon Caption Contest (Bernoulli
draws).

Our experiments are run on artificial data with Gaussian/Bernoulli
draws and real-world Bernoulli draws from the New Yorker
Cartoon Caption Contest.
Recall that the sample complexity of the best-arm MAB algorithm is
determined by the gaps $\Delta_j = \mu_{\realbest} - \mu_j$. One of
the main relevant differences to consider between an experiment of
artificial or real-world nature is thus the distribution of the means
$\mu_i$ for $i=1,\dots,\numarms$.  The artificial data simulations are
run with a fixed gap between the mean of the best arm
$\mu_{\realbest}$ and second best arm $\mu_2$, which we denote by
$\Delta = \mu_{\realbest} - \mu_2$. In each experiment (hypothesis),
the means of the other arms are set uniformly in $[0, \mu_2]$.  For
our real-world simulations with the cartoon contest, the means for the
arms in each experiment are not arbitrary but correspond to empirical
means from the caption contest. In addition, the contests actually
follow a natural chronological order (see details below), which makes
this dataset highly relevant to our purposes. In all simulations, 60\%
of all the hypotheses are true nulls, and their indices are chosen
uniformly.



\subsection{Power and sample complexity}
\label{SecPower}
The first set of simulations compares \fMABFDR{} against
\fABFDR{}. They confirm that the total number of necessary pulls to
determine significance (which we refer to as \emph{sample complexity})
is much smaller for \fMABFDR{} than for \fABFDR{}.
In the \fMABFDR{} framework, this also effectively leads to higher
power given a fixed truncation time. 

Two types of plots are used to demonstrate the superiority of our
procedure: for one we fix the number of arms and plot the $\epsBDR$
with $\epsilon = 0$ (which we call $\BDR$ for short) for both
procedures over different choices of truncation times $\trunctime$.
For the other we fix $\trunctime$ and show how the sample complexity
varies with the number of arms.  Note that low $\BDR$ means that the
bandit algorithm often reaches truncation time before it could stop.


\vspace{-0.1in}
\subsubsection{Simulated Gaussian and Bernoulli trials}

For the Gaussian draws, we set $\mu_{\realbest} = 8$.  The gap to the
second best arm is $\Delta = 3$ so that all means $\mu_{i \neq
  \realbest}$ are drawn uniformly between $Unif \sim [0,5]$. The
number of hypotheses is fixed to be $500$.  For Bernoulli draws we
choose the maximum mean to be $\mu_{\realbest} = 0.4$, $\Delta = 0.3$
so that all means $\mu_{i\neq \realbest}$ are drawn uniformly between
$Unif \sim [0,0.1]$. The number of hypotheses is fixed at $50$.  We
display the empirical average over $100$ runs where each run uses the
same hypothesis sequence (indicating which hypotheses are true and
false) and sequence of means $\mu_i$ for each hypothesis. The only
randomness we average over comes from the random Gaussian/Bernoulli
draws which cause different rejections $R_j$ and $\alpha_j$, so that
the randomness in each draw propagates through the online FDR
procedure.  The results can be seen in Figures~\ref{FigGaussian}
and~\ref{FigBer}.

\begin{figure}[htbp]
\begin{center}
\begin{tabular}{cc}
\widgraph{.45\textwidth}{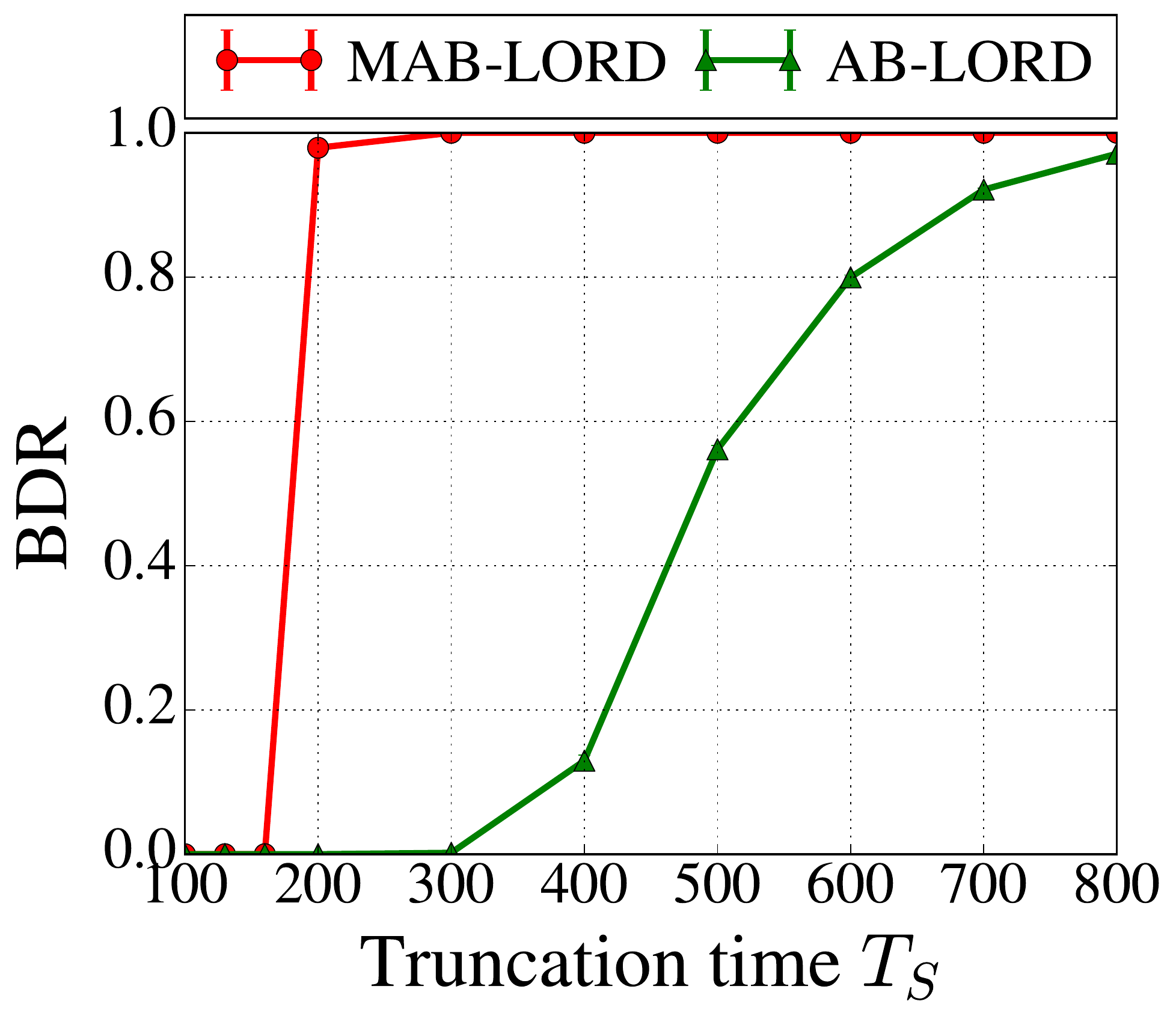} &
\widgraph{.45\textwidth}{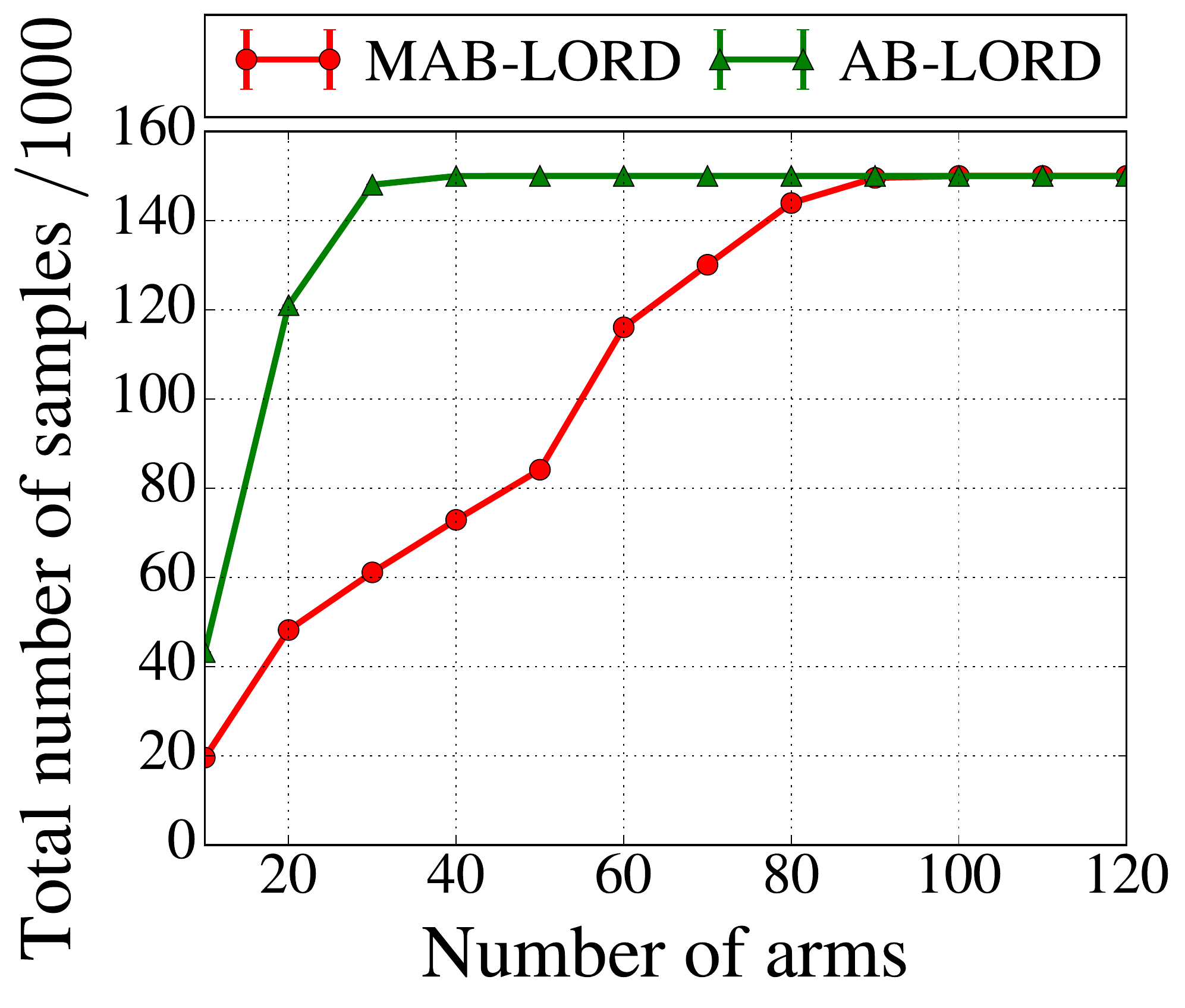} \\
(a) & (b)
\end{tabular}
\vspace{-0.1in}
\end{center}
\caption{(a) Power vs. truncation time $T_S$ (per
  hypothesis) for $50$ arms and (b) Sample complexity vs. \#
  arms for truncation time $\trunctime = 300$ for Gaussian draws with
  fixed $\mu_{\realbest} = 8$, $\Delta = 3$ over $500$ hypotheses with $200$
  non-nulls, averaged over $100$ runs.}
\label{FigGaussian}
\vskip -0.1in
\end{figure} 

The power at any given truncation time is much higher for \fMABFDR{}
than \fABFDR{}.  This is because the best-arm MAB is more likely to
satisfy the stopping criterion before any given truncation time than
the uniform sampling algorithm.
The plot in Fig.~\ref{FigGaussian}(a) suggests
that the actual stopping time of the algorithm is concentrated between
$160$ and $200$ while it is much
more spread out for the uniform algorithm.

\begin{figure}[h!]
\vskip 0.2in
\begin{center}
\begin{tabular}{cc}
\widgraph{.45\textwidth}{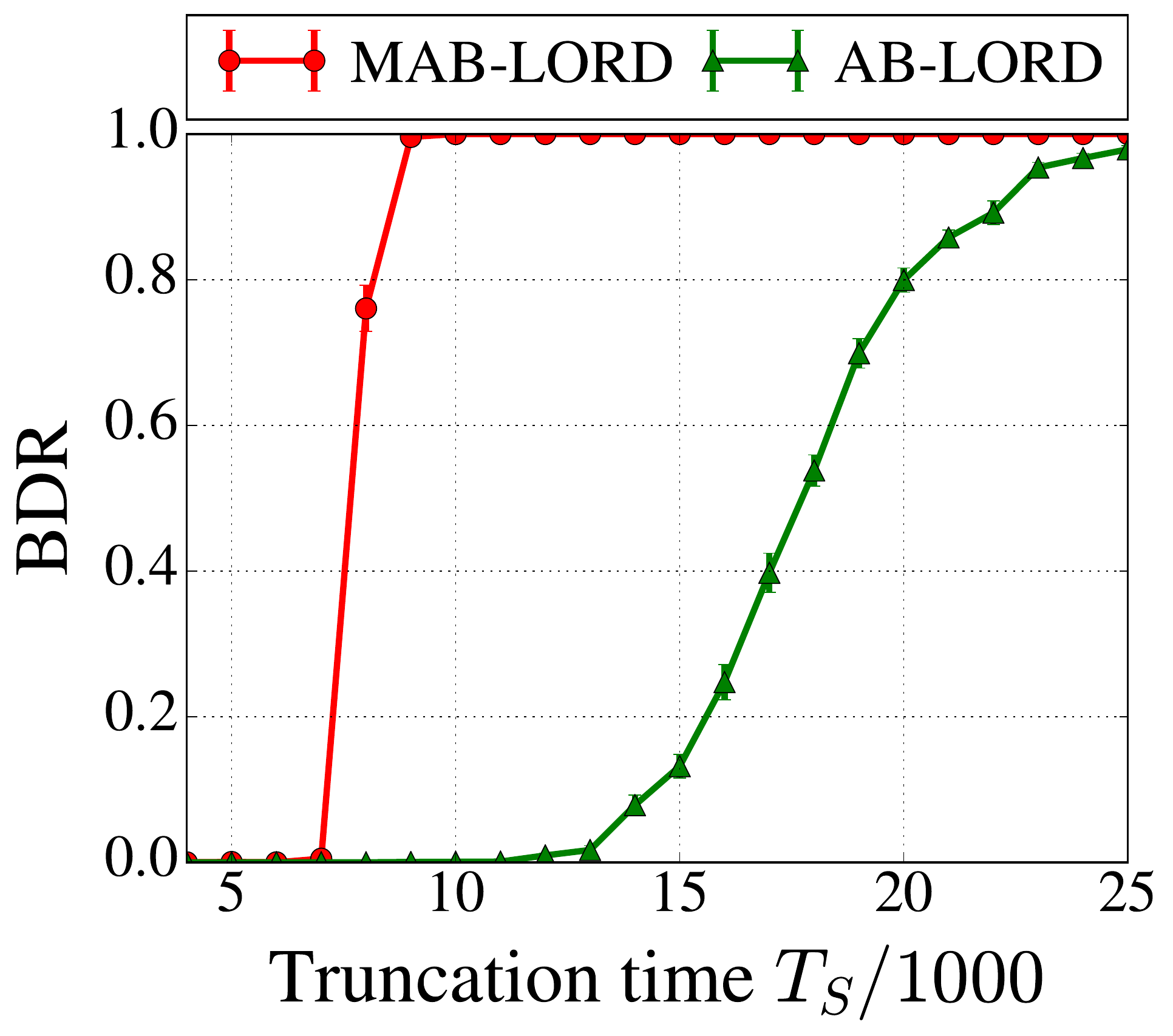} & 
\widgraph{.45\textwidth}{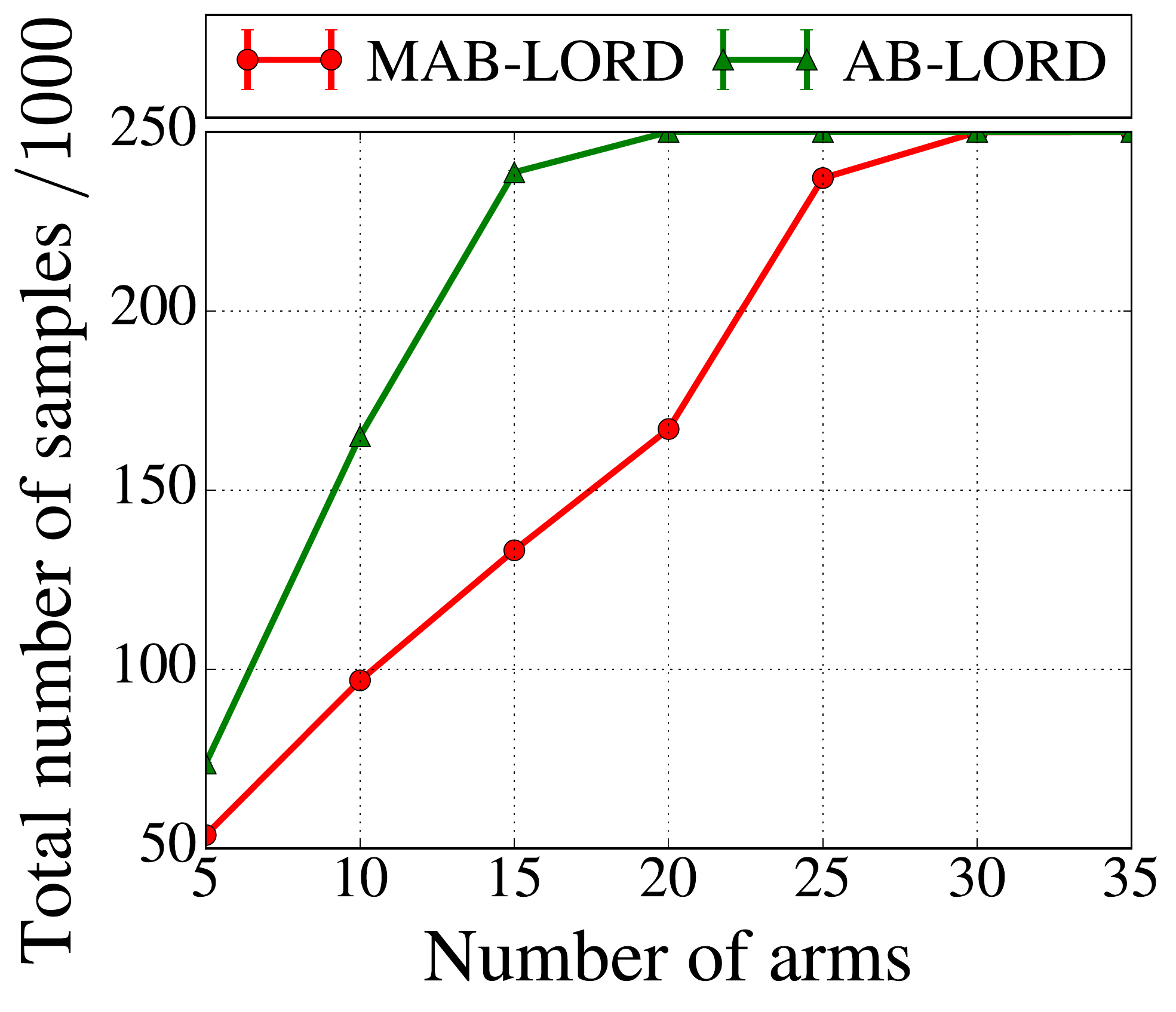} \\
(a) & (b)
\end{tabular}
\vspace{-0.1in}
\end{center}
\caption{(a) Power over truncation time $T_S$ (per
  hypothesis) for $50$ arms and (b) Sample complexity over number of
  arms for truncation time $\trunctime = 5000$ for Bernoulli draws with
  fixed $\mu_{\realbest} = 0.7$, $\Delta = 0.3$ over $50$ hypotheses with $20$
  non-nulls, averaged over $100$ runs.}
\label{FigBer}
\vskip -0.2in
\end{figure}

The sample complexity plot in Fig.~\ref{FigGaussian}(b) qualitatively
shows how the total number of necessary arm pulls for \fABFDR{}
increases much faster with the number of arms than for the \fMABFDR{},
before it plateaus at the truncation time multiplied by the number of
hypotheses. Recall that whenever the best-arm MAB stops before the
truncation time in each hypothesis, the stopping criterion is met,
i.e. the best arm is identified with probability at least
$1-\alpha_j$, so that the power is bound to be close to one whenever
$\rowstop_j = T(\alpha_j)$.



For Bernoulli draws we choose the maximum mean to be $\mu_{\realbest}
= 0.4$, $\Delta = 0.3$ so that all means $\mu_{i\neq \realbest}$ are
drawn uniformly between $Unif \sim [0,0.1]$. The number of hypotheses
is fixed at $50$.  Otherwise the experimental setup is identical to
those discussed in the main text for Gaussians.  The plots for
Bernoulli data can be found in Fig.~\ref{FigBer}.

The behavior for both Gaussian and Bernoullis are comparable, which is
not surprising due to the choice of the subGaussian LIL bound.
However one may notice that the choice of the gap of $\Delta = 3$
vs. $\Delta = 0.3$ drastically increases sample complexity so that the
phase transition for power is shifted to very large $T_S$.

\vspace{-0.1in}
\subsubsection{Application to New Yorker captions}

In the simulations with real data we consider the crowd-sourced data
collected for the \emph{New Yorker Magazine's} Cartoon Caption
contest: for a fixed cartoon, captions are shown to individuals online
one at a time and they are asked to rate them as `unfunny', `somewhat
funny', or `funny'.  We considered 30 contests\footnote{Contest
  numbers 520-551, excluding 525 and 540 as they were not
  present. Full dataset and its description is available at
  \url{https://github.com/nextml/NEXT-data/}.} where for each contest,
we computed the fraction of times each caption was rated as either
`somewhat funny' or `funny'.  We treat each caption as an arm, but
because each caption was only shown a finite number of times in the
dataset, we simulate draws from a Bernoulli distribution with the
observed empirical mean computed from the dataset.  When considering
subsets of the arms in any given experiment, we always use the
captions with the highest empirical means (i.e. if $n=10$ then we use
the $10$ captions that had the highest empirical means in that
contest).

Although \fMABFDR{}\: still outperforms \fABFDR{}\: by a large margin,
the plots in Figure~\ref{FigNY} also show how the power and sample
complexity notably differ from our toy simulation, where we seem to
have chosen a rather benign distribution of means - in this setting,
the gap $\Delta$ is much lower, often around $\sim 0.01$.

\begin{figure}[h!]
\begin{center}
\begin{tabular}{cc}
 \widgraph{.45\textwidth}{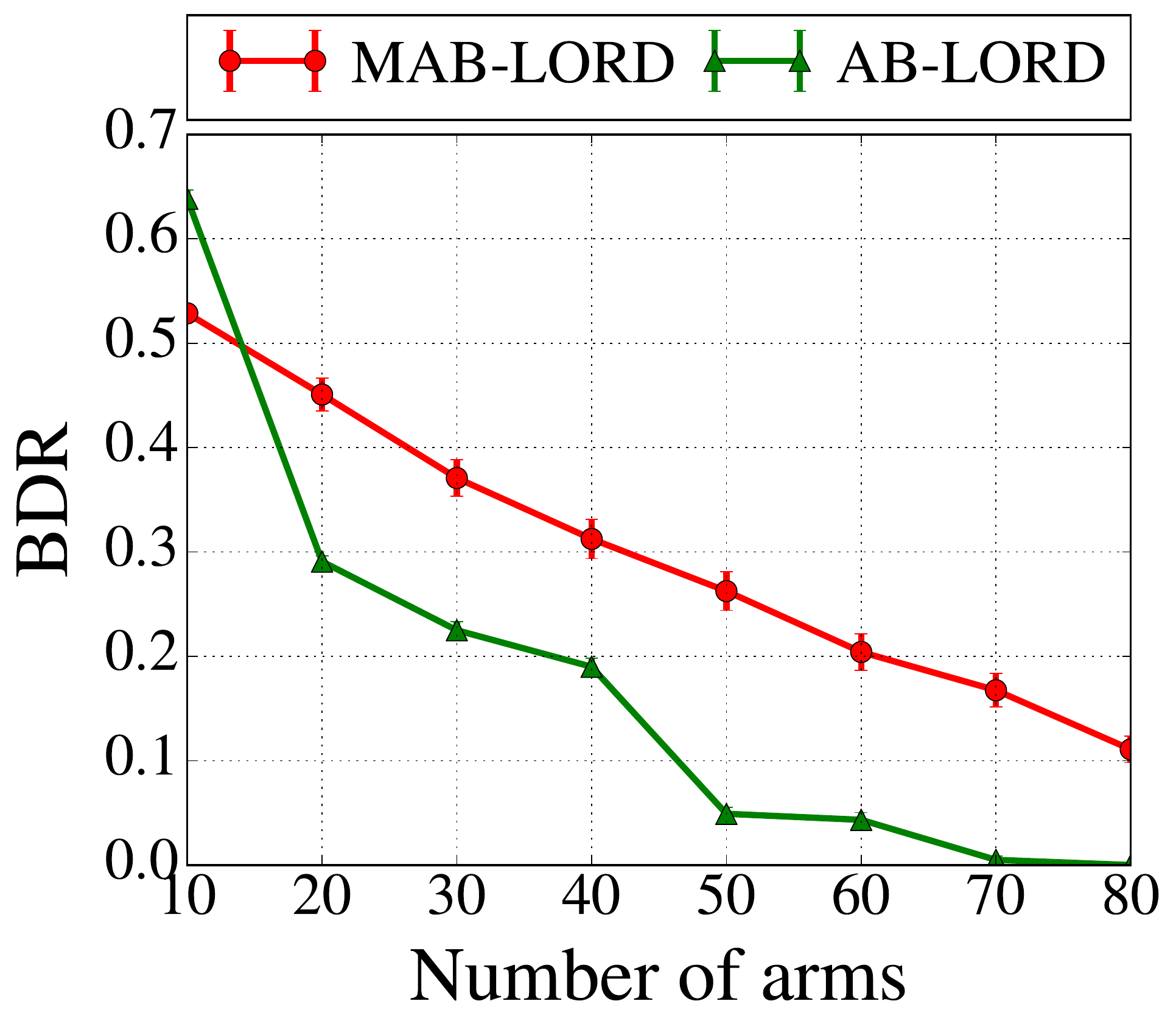} & \hspace{.5cm}
\widgraph{.45\textwidth}{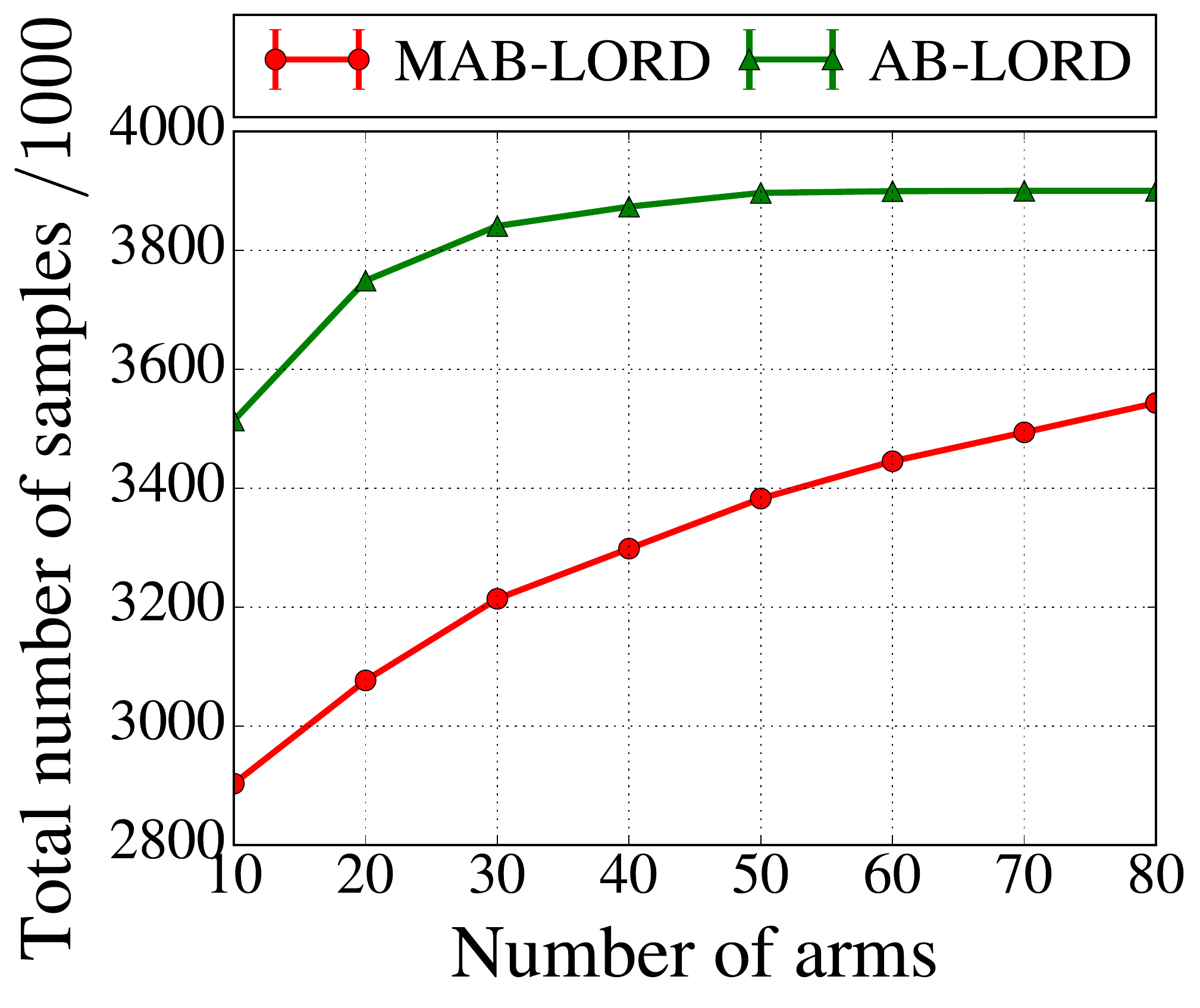} \\
(a) & (b)
\end{tabular}
\vspace{-0.1in}
\end{center}
\caption{(a) $\BDR$ over number of arms, i.e. truncation time per
  hypothesis for $10$ arms and (b) Sample complexity over number of
  arms for truncation time $\trunctime = 130000$ for Bernoulli draws,
  $30$ hypotheses with $12$ non-nulls and averaged over $100$
  runs. }
\label{FigNY}
\vskip -0.1in
\end{figure}

\vspace{-0.1in}
\subsection{mFDR and FDR control}

In this section we use simulations to demonstrate the second part of
our meta algorithm which deals with the control of the false discovery
rate or its modified version.  Since bandit algorithms have a very
high best-arm discovery guarantee which in practice even exceeds its
theoretical guarantee of at least $1-\alpha_j$, mFDR and FDR plots on
\fMABFDR{}\: directly do not lead to very insightful plots - namely
the constant $0$ line.  However, we can demonstrate that even under
adversarial conditions, i.e. when the $\nullp{}$-value under the null
is much less concentrated around one than obtained via the best arm
bandit algorithm, mFDR or the false discovery proportion (FDP) in each
run are still controlled \emph{at any time $t$} as
Theorem~\ref{ThmOnlinemFDR} guarantees. Albeit not exactly reflecting
mFDR control in the case of \fMABFDR{}\: but in fact in an even harder
setting, results from these experiments can be regarded as valuable on
their own - it emphasizes the fact that Theorem~\ref{ThmOnlinemFDR}
guarantees mFDR control independent of the adaptive sampling algorithm
and specific choice of $p$-value as long as it is always valid.

For Figure~\ref{FigFDR}, we again consider Gaussian draws with the
same settings as described in~\ref{SecPower}. This time however, for
each true null hypothesis we skip the bandit experiment and directly
draw $\nullp{j} \sim [0,1]$ to compare with the significance levels
$\alpha_j$ from our online FDR procedure~\ref{ProcLORD}. As mentioned
above, by Theorem~\ref{ThmOnlinemFDR}, mFDR should still be controlled
as it only requires the $p$-values to be super-uniform.  In
Figure~\ref{FigFDR}(a) we plot the instantaneous false discovery
proportion (number of false discoveries over total discoveries)
$\FDP(J) = \frac{\sum_{j\in \truenulls{J}} R_j}{\sum_{j = 1}^T R_j} $
over the hypothesis index for different runs with the same settings.
Apart from fluctuations in the beginning due to the relatively small
denominator, we can observe how the guarantee for the $\FDR(J) = \EE
\:\:\FDP(J)$, with its empirical value  depicted by the red line,
transfers to the control of each individual run (blue lines).

In Figure~\ref{FigFDR}, we compare the mFDR (which in fact coincides
with the FDR in this plot) of MAB-FDR using different multiple testing
procedures, including MAB-IND and a Bonferroni type correction.  The
latter uses a simple union bound and chooses $\alpha_j$ such that
$\sum_{j =1}^\infty \alpha_j \leq \alpha$ and thus trivially allows
for any time FWER, and thus FDR control. In our simulations we use
$\alpha_j = \frac{6 \alpha}{\pi^2 j^2}$.  As expected, Bonferroni is
too conservative and barely makes any rejections whereas the naive
MAB-IND approach does not control FDR. LORD avoids both extremes and
controls FDR while having reasonable power.

\begin{figure}[t!]
\vskip -0.2in
\begin{center}
\hspace*{-0.1in}
\begin{tabular}{cc}
\hspace{-.5cm}\widgraph{.55\textwidth}{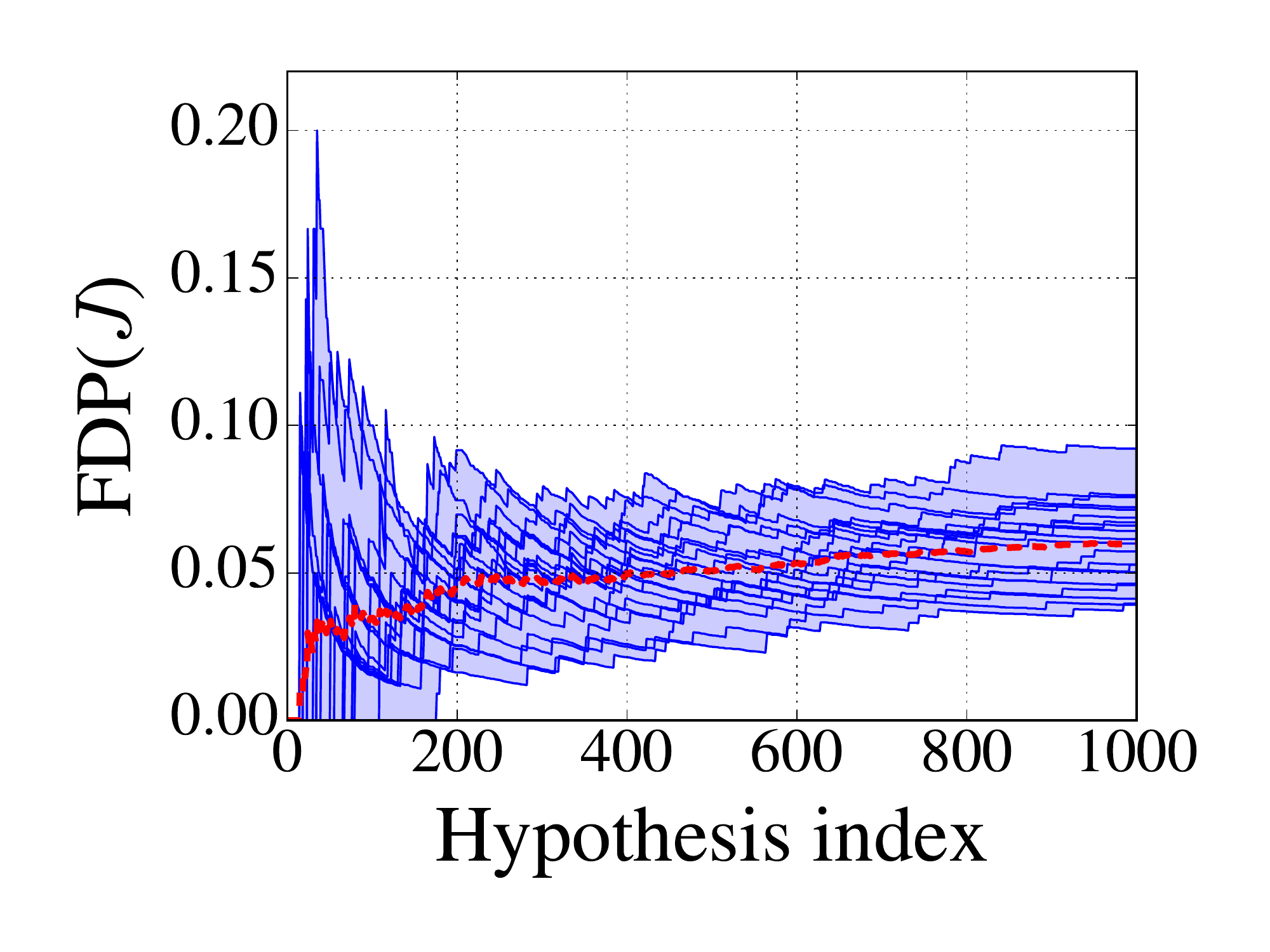} &  \hspace{-.5cm}
\widgraph{.50\textwidth}{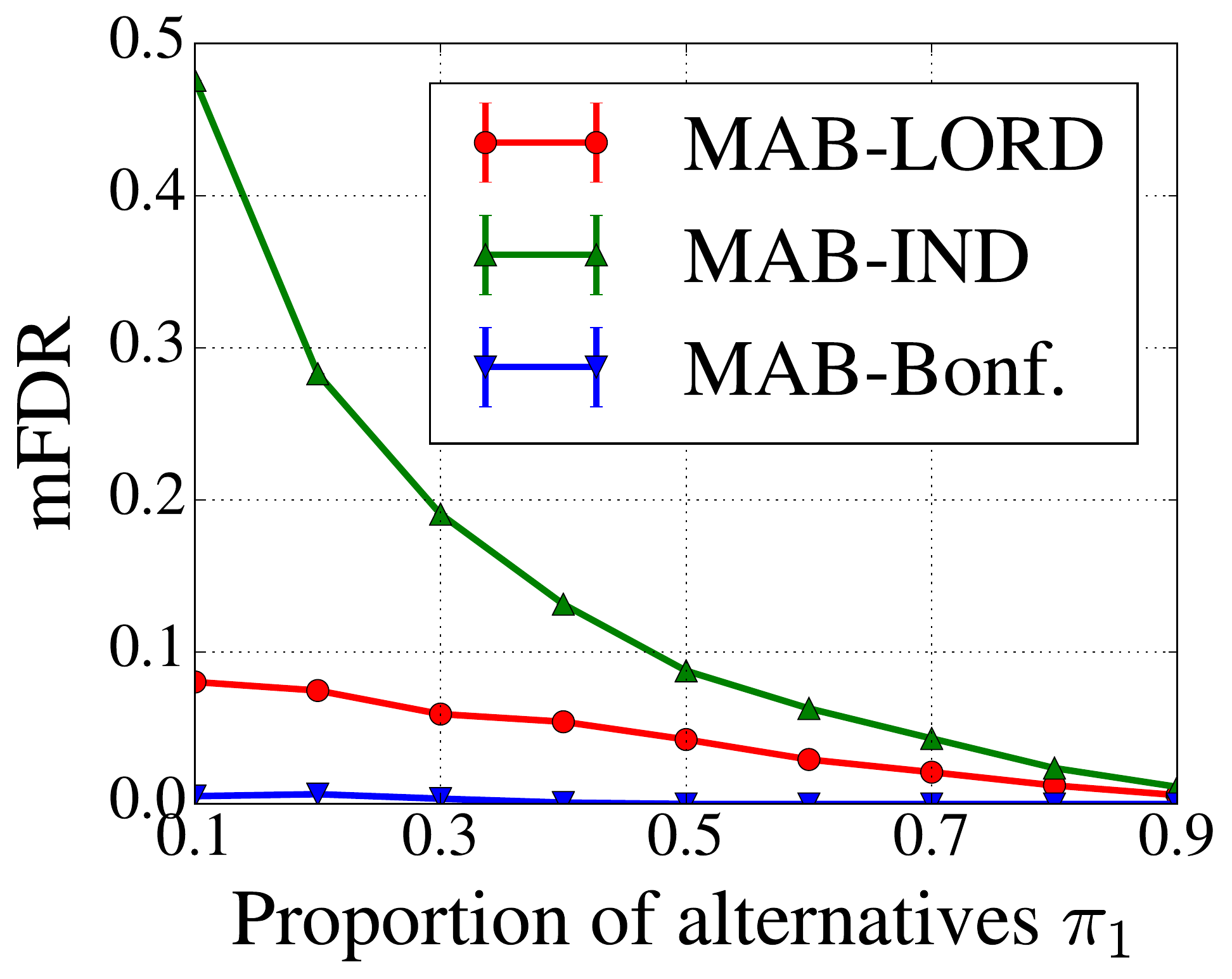} \\
(a) & (b)
\end{tabular}
\end{center}
\caption{(a) Single runs of MAB-LORD (blue) and their average (red)
  with uniformly drawn $p$-values for null hypotheses and Gaussian
  draws for non-nulls with $\mu_{\realbest} = 8$, $\Delta = 3$ and
  $T_S = 200$, $500$ hypotheses with $200$ true nulls and $30$ arms,
  the desired mFDR level is $\alpha = 0.1$ (b) mFDR over different
  proportions of non-nulls $\pi_1$, with same settings, averaged over
  $80$ runs.}
\label{FigFDR}
\vspace{-0.05in}
\end{figure}

\vspace{-0.1in}
\section{Proofs}

In this section we provide the proofs of the main results in the paper.

\vspace{-0.1in}
\subsection{Proof of Proposition~\ref{PropPVal}}
\label{SecProofPropPVal}

For any fixed $\gamma \in (0,1)$, we have the equivalence
\begin{align*}
\widehat{\mu}_{i,n_i(t)} - \dev_{n_i(t)}(\tfrac{\gamma}{2\numarms}) >
\widehat{\mu}_{0,n_0(t)} +\dev_{n_0(t)}(\tfrac{\gamma}{2}) + \epsilon
\quad \Longleftrightarrow \quad p_{i,t} \leq \gamma.
\end{align*}
If $\max \limits_{i=1,\dots,K} \mu_i \leq \mu_0 + \epsilon$, then we
have
\begin{align*}
\hspace{.5in}&\hspace{-.5in}\PP\left( \bigcup_{i=1}^K
\bigcup_{t=1}^\infty \Big\{ \widehat{\mu}_{i,n_i(t)} -
\dev_{n_i(t)}(\tfrac{\gamma}{2\numarms}) > \widehat{\mu}_{0,n_0(t)}
+\dev_{n_0(t)}(\tfrac{\gamma}{2}) + \epsilon \Big\} \right)\\
& = 1 - \PP\left( \bigcap_{i=1}^K \bigcap_{t=1}^\infty \Big\{
\widehat{\mu}_{i,n_i(t)} - \dev_{n_i(t)}(\tfrac{\gamma}{2\numarms})
\leq \widehat{\mu}_{0,n_0(t)} +\dev_{n_0(t)}(\tfrac{\gamma}{2}) +
\epsilon \Big\} \right)\\
& \leq 1- \PP\left( \bigcap_{t=1}^\infty
\Big\{ \mu_0 \leq \widehat{\mu}_{0,t} +\dev_{t}(\tfrac{\gamma}{2})
\Big\} \cap \bigcap_{i=1}^K \bigcap_{t=1}^\infty \Big\{
\widehat{\mu}_{i,n_i(t)} - \dev_{n_i(t)}(\tfrac{\gamma}{2\numarms})
\leq \mu_i \Big\} \right) \\
& \leq \PP\left( \bigcup_{t=1}^\infty \Big\{ \mu_0 >
\widehat{\mu}_{0,t} +\dev_{t}(\tfrac{\gamma}{2}) \Big\} \right) +
\sum_{i=1}^K \PP\left( \bigcup_{t=1}^\infty \Big\{
\widehat{\mu}_{i,n_i(t)} - \dev_{n_i(t)}(\tfrac{\gamma}{2\numarms}) >
\mu_i \Big\} \right) \\
& \leq \tfrac{\gamma}{2} + K \tfrac{\gamma}{2K} = \gamma
\end{align*} 
by equation~\eqref{eqn:lil_confidence}.  Thus, we have
$\PP\left(\bigcup_{i=1}^K \bigcup_{t=1}^\infty \Big\{ p_{i,t} \leq
\gamma \Big\} \right) \leq \gamma$, which completes the proof.


\subsection{Proof of Proposition~\ref{PropLUCBEps}}
\label{SecProofPropLUCBEps}

Here we prove that the algorithm~\ref{AlgoModLUCB} terminates in finite time. The technical proof for sample complexity is moved to the Appendix~\ref{SecSampleComplex}.
It suffices to argue for $\delta/2 \leq 0.1$ and we discuss the other case
at the end. 

\paragraph{Proof of termination in finite time}

First we prove by contradiction that the algorithm terminates in
finite time with probability one for the case
$\mu_0 \geq \max_{i=1,\dots,K} \mu_i - \epsilon$.

Assuming that there exist
runs for which the algorithm does not terminate, the set of arms
defined by
\begin{equation*}
S := \{ i: \LCB_0(t) \leq \UCB_i(t) -
\epsilon \text{ infinitely often (i.o.)}\}
\end{equation*}
is necessarily non-empty for these runs. We now show
that this assumption yields a contradiction so that 
\begin{align}
\label{EqnTerminates}
\PP(\text{Algorithm does not terminate}) \leq 
\PP( \LCB_0(t) \leq \max_{i=1,\dots, \numarms} \UCB_i(t) -
\epsilon \text{ i.o.}) = 0
\end{align}

First take note that by definition of the algorithm, if an arm $i$ is
drawn infinitely often (i.o.), then so is the control arm $0$ and we have
$\LCB_0(t) \to \mu_0$ as well as $\UCB_i(t) \to \mu_i$ as $t\to \infty$. 
This follows by the law of large numbers combined
with the fact that $\dev_{n_i(t)}, \dev_{n_0(t)} \to 0$ as $t\to \infty$,
since $\dev_{n} \to 0$ as $n\to \infty$.
Since for the null hypothesis we have $\mu_0 > \mu_i - \epsilon$, 
it follows that $\LCB_0(t) > \UCB_i(t) - \epsilon$ for all $t\geq t'$ 
for some $t'$. 

This argument implies that all arms $i \in S$ can only be drawn a
finite number of times, i.e. $n_i(t) <\infty$ for all $i \in S$.
However, the fact that they are not drawn i.o. implies that $h_t \neq
i$ and $\ell_t \neq i$ i.o. for all $i\in S$, so that
there exists $i' \not \in S$ such that $\max_{i \in S}\UCB_i(t) \leq
\UCB_{i'}(t)$ i.o. By definition of $S$ we then obtain
\begin{equation}
\label{EqnContradict}
\LCB_0(t) \leq \UCB_{i'}(t) - \epsilon \text{ i.o. }
\end{equation}
However, since $i' \not \in S$, inequality \eqref{EqnContradict} cannot hold
and equation~\eqref{EqnTerminates} is proved.

A nearly identical argument to the above shows that the stopping
condition is met in finite time.


\subsection{Proof of Theorem~\ref{ThmOnlinemFDR}}
\label{SecProofThm1}

We now turn to the proof of Theorem~\ref{ThmOnlinemFDR}, splitting our
argument into parts (a) and (b), respectively.


\subsubsection{Proof of part (a)}
\label{SecProof1a}

In order for generalized alpha-investing procedures such as LORD to
successfully control the mFDR, it is sufficient that $p$-values under
the null be \emph{conditionally super-uniform}, meaning that for all
$j \in \truenulls$, we have
\begin{align}
\label{EqnPCondition}
\PP_{0}(\nullp{j} \leq \alpha_j |\sfield^{j-1}) \leq
\alpha_j(\rej_1,\dots, \rej_{j-1})
\end{align}
where $\sfield^{j-1}$ is the $\sigma$-field induced by $\rej_1, \dots,
\rej_{j-1}$.  
Note that as long as condition~\eqref{EqnPCondition} is satisfied, 
$\rowstop_j$ and thus
$\nullp{j}$ could potentially depend on $\alpha_j$, i.e. the rejection
indicator variables $R_1,\dots, R_{j-1}$ and potentially
$\nullp{1},\dots, \nullp{j-1}$.  See Aharoni and Rosset~\cite{AR14}
for further details.  

It thus suffices to show that condition~\eqref{EqnPCondition} holds
for our definition of $p$-value in our framework.  We know that by
Proposition~\ref{PropPVal} we have for any random stopping time, thus
any fixed truncation time $\trunctime$, that $\PP_0(\nullp{j}_T \leq
\alpha_j)\leq\alpha_j$. We now show that the same bound also holds for
the ($\alpha_j$-dependent) bandit stopping time $T(\putinalpha)$, i.e.
that $\PP_0(\nullp{j}_{T(\putinalpha)} \leq\alpha_j) \leq \alpha_j$.

Under the null hypothesis, the best arm is at most $\epsilon$ better
than the control arm, i.e. $\mu_0 > \mu_i - \epsilon$, so that by 
Proposition~\ref{PropLUCBEps} we have that with probability $\geq 1-\alpha_j$,
$\Output = 0$, i.e. $\LCB_0(t) > \UCB_i(t) - \epsilon$ for all $i\neq 0$.
Hence, $\LCB_i(t) - \UCB_0(t) < \epsilon$, and thus, by the definition of the $p$-values,
$\nullp{j}_{i,\banditstop} = 1$ for all $i$ with probability $\geq 1-\alpha_j$. 
It finally follows that $\PP_0(\nullp{j}_{\banditstop} \leq \alpha_j) \leq \alpha_j$. 

Putting things together, under the true null hypothesis (omitting the index
$j\in \truenulls$ to simplify notation) we directly have that for any
$\alpha_j$
\begin{align*}
\PP_0(\nullp{j}_{\rowstop_j} (\alpha_j)\leq \alpha_j) &=
\PP_0\big(\nullp{j}_{\banditstop} \leq \alpha_j\big|
\banditstop \leq \trunctime\big)\PP_0(\banditstop\leq
\trunctime) \\ &+\PP_0\big(\nullp{j}_{\trunctime} \leq \alpha_j\big|
\banditstop > \trunctime\big) \PP_0(\banditstop >
\trunctime) \\ & \leq \alpha_j ( \PP_0(\banditstop\leq
\trunctime) + \PP_0(\banditstop > \trunctime)) = \alpha_j
\end{align*}
for all fixed $\alpha_j$ even when the stopping time
$\banditstop$ is dependent on $\alpha_j$.  This is
equivalent to stating that for any sequence $\rej_1, \dots,
\rej_{j-1}$ we have
\begin{align*}
\PP_{0}(\nullp{j} \leq \alpha_j(\rej_1, \dots, \rej_{j-1})
  |\sfield^{j-1}) &=
  \PP_{0}(\nullp{j}_{T(\alpha_j(\rej_1,\dots,\rej_{j-1}))} \leq
  \alpha_j(\rej_1, \dots, \rej_{j-1})) \\
  &\leq \alpha_j(\rej_1,\dots,
  \rej_{j-1})
\end{align*}
and the proof is complete.


\subsubsection{Proof of part (b)}

It suffices to prove that for a single experiment $j$ and $\trunctime
= \infty$, we have $\PP_1(\nullp{j}_{\banditstop}\leq \alpha_j) \geq
1-\alpha_j$ where $\PP_1$ is the distribution of a non-null experiment $j$. 
First observe that at stopping time $\banditstop$ of
Algorithm~\ref{AlgoModLUCB}, either $\nullp{j}_{i,\banditstop} \leq \alpha_j$
or $\nullp{j}_{i,\banditstop} = 1$ for all $i$.  The former event happens 
whenever the algorithm exits with $\Output \in \SetStar$, i.e. 
when $\LCB_{\Output}(t) \geq \UCB_{\ell_t}(t) - \epsilon$ holds. 
Then, by definition
of the $p$-value in equation~\eqref{EqnPVal} and $\ell_t$ we must have that
$\nullp{j}_{\Output,\banditstop} \leq \putinalpha$. 
 As a consequence, by Proposition~\ref{PropLUCBEps}, we have
\begin{align*}
\PP_1 (\nullp{j}_{\banditstop} \leq \alpha_j) &\geq \PP(\nullp{j}_{\banditstop} \leq \putinalpha)\\
&\geq \PP_1(\text{Algorithm~\ref{AlgoModLUCB}
  exits with} \: \Output \in \SetStar) \\
  &\geq 1 - \alpha_j
\end{align*}
and the proof is complete.

\section{Discussion}
\label{SecDiscussion}

The recent focus in popular media about the lack of reproducibility of
scientific results erodes the public's confidence in published
scientific research.  To maintain high standards of published results
and claimed discoveries, simply increasing the statistical
significance standards of each individual experimental work (e.g.,
reject at level 0.001 rather than 0.05) would drastically hurt power.
We take the alternative approach of controlling the ratio of false
discoveries to claimed discoveries at some desired value (e.g., 0.05)
over many sequential experiments.  This means that the statistical
significance for validating a discovery changes from experiment to
experiment, and could be larger or smaller than 0.05, requiring less
or more data to be collected.  Unlike earlier works on online FDR
control, our framework synchronously interacts with adaptive sampling
methods like MABs over uniform sampling to make the overall sampling
procedure as efficient as possible.  We do not know of other works in
the literature combining the benefits of adaptive sampling and FDR
control.  It should be clear that any improvement, theoretical or
practical, to either online FDR algorithms or best-arm identification
in MAB (or their variants), immediately results in a corresponding
improvement for our MAB-FDR framework.

More general notions of FDR with corresponding online
procedures have recently been developed by Ramdas et
al~\cite{onlineFDR17}. In particular, they incorporate the notion of
memory and a priori importance of each hypothesis. This could prove to be a
valuable extension for our setting, especially in cases when only the
percentage of wrong rejections in the recent past matters. It would
be useful to establish FDR control for these generalized notions of FDR
as well.

There are several directions that could be explored in future
work. First, it would be interesting to extend the MAB aspect (in
which each arm is univariate) of our framework to more general
settings. Balasubramani and Ramdas~\cite{BR16} show how to construct
sequential tests for many multivariate nonparametric testing problems,
using LIL confidence intervals, which can again be inverted to provide
always valid p-values. It might be of interest to marry the ideas in
our paper with theirs. For example, the null hypothesis might be that
the control arm has the same (multivariate) mean as other arms
($K$-sample testing), and under the alternative, we would like to pick
the arm whose mean is furthest away from the control. A more
complicated example could involve dependence, where we observe pairs
of arms, and the null hypothesis is that the rewards in the control
arm are independent of the alternatives, and if the null is false we
may want to pick the most correlated arm.  The work by Zhao et
al.~\cite{Ermon16} on tightening LIL-bounds could be practically
relevant.  Recent work on sequential p-values by Malek et
al.~\cite{Malek17}  also naturally fit
into our framework.  Lastly, in this work we treat samples or pulls
from arms as identical from a statistical perspective; it might be of
interest in subsequent work to extend our framework to the contextual
bandit setting, in which the samples are associated with features to
aid exploration.


\subsection*{Acknowledgements}

This work was partially supported by Office of Naval Research MURI
grant DOD-002888, Air Force Office of Scientific Research Grant
AFOSR-FA9550-14-1-001, and National Science Foundation Grants
CIF-31712-23800 and DMS-1309356.

\bibliography{fdrbandit}
\bibliographystyle{IEEEtran}

\newpage
\appendix

\section{Notation}
\begin{table}[h!]
\centering
\begin{tabular}{| l l |}
\hline
Notation & Terminology and explanation \\
\hline
$\MAB$ & (pure exploration for best-arm identification in) multi-armed bandits\\
$\FDR(J)$ & the expected ratio of \# false discoveries to \# discoveries up to experiment $J$\\
$\mFDR(J)$ & the ratio of expected \# false discoveries to expected \# discoveries \\
$\alpha$ &  target for $\FDR$ or $\mFDR$ control after any number of experiments\\
$\BDR(J)$ & the best arm discovery rate (generalization of test power)\\
$\epsBDR(J)$ & the $\epsilon$-best arm discovery rate (softer metric than $\BDR$) \\
$\LCB,\UCB$ & the lower and upper confidence bounds used in the best-arm algorithms\\
\hline
$j \in \NN$ & experiment counter (number of MAB instances)\\
$T_j \in \NN$ & stopping time for the $j$-th experiment \\
$P^j_t, P_t \in [0,1]$ & always valid $p$-value after time $t$ (in experiment $j$, explicit or implicit)\\
$\nullp{j}$ & always valid $p$-value for experiment $j$ at its stopping time $T_j$\\
$\alpha_j \in [0,1]$ & threshold set by the online FDR algorithm for $P^j$, using $\{p_i\}_{i=1}^{j-1}$\\
$T(\alpha_j) \in \NN$ & stopping time for the $j$-th experiment, when experiment uses $\alpha_j$\\
\hline
0 & the control or default arm\\
$\{1,\dots,\numarms\}$ & $\numarms = \numarms(j)$ alternatives or treatment arms (experiment $j$ implicit)\\
$i \in \{0,\ldots,\numarms\}$ & $\numarms+1$  options or ``all arms''\\
$\realbest, \Output$ & the best of all arms, and the arm returned by MAB \\
$\mu_i, \mu_*$ & the mean of the $i$-th arm, and the mean of the best arm \\
$t, n_i(t) \in \NN$ & total number of pulls, number of times arm $i$ is pulled up to time $t$\\
\hline
\end{tabular}
\caption{Common notation used throughout the paper.}
\label{tab:notation}
\end{table}

\section{Notes on FDR control}
\label{SecFDRControl}
We can prove FDR control for our framework using the specific online
FDR procedure called LORD '15 introduced in \cite{JM15}.  When used in
Procedure~\ref{ProcLORD}, the only adjustment that needs to be made is
to reset $W(j+1)$  to $\alpha$ in step 2 after every rejection, yielding
$\alpha_j = \alpha \gamma_{j - \tau_j}$ for any sequence $\{\gamma_j\}_{j=1}^\infty$
such that $\sum_{j=1}^\infty \gamma_j = 1$. We call the adjusted
procedure MAB-LORD' for short.

\begin{theos}[Online FDR control for MAB-LORD]
\begin{enumerate}[(a)]
\item MAB-LORD' achieves mFDR and FDR control at a specified level
$\alpha$ for stopping times $T_j = \min\{T(\alpha_j), M\}$.
\item Furthermore, if we set $\trunctime = \infty$,
  MAB-LORD' satisfies
\begin{align}
\epsBDR(\numexp) \geq \frac{  (1- \alpha)}{|\falsenulls(\numexp)|}.
\end{align}
\end{enumerate}
\end{theos}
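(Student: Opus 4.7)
The plan is to mirror the proof of Theorem~\ref{ThmOnlinemFDR} with two adjustments: upgrading mFDR to FDR control by leveraging the specific structure of the LORD '15 wealth rule, and replacing the BDR lower bound with a uniform one that uses the simpler test level $\alpha_j = \alpha \gamma_{j-\tau_j}$.

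\paragraph{Part (a).} The first step is to reestablish conditional super-uniformity of the null $p$-values, i.e.
\begin{align*}
\PP_0\bigl(\nullp{j} \leq \alpha_j(\rej_1,\dots,\rej_{j-1}) \,\big|\, \sfield^{j-1}\bigr) \leq \alpha_j(\rej_1,\dots,\rej_{j-1}), \qquad j \in \truenulls.
\end{align*}
This bound carries over verbatim from Section~\ref{SecProof1a}, since its proof only uses Propositions~\ref{PropPVal} and~\ref{PropLUCBEps} together with the fact that $\banditstop$ depends on past rejections solely through $\alpha_j$; none of this is sensitive to how $\alpha_j$ itself is computed. From here mFDR control at level $\alpha$ follows immediately by viewing MAB-LORD' as a generalized $\alpha$-investing procedure in the sense of Aharoni and Rosset~\cite{AR14}. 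For the stronger \emph{FDR} guarantee, I would invoke the main FDR theorem of Javanmard and Montanari~\cite{JM15}, which shows that with the specific choice $\alpha_j = \alpha \gamma_{j-\tau_j}$ and wealth reset to $\alpha$ after each rejection, the online procedure controls $\FDR(J)$ at level $\alpha$ for every $J$ under exactly the conditional super-uniformity condition verified above. The main obstacle here is that the JM15 argument is typically stated for $p$-values whose dependence is limited to past rejection indicators; I would need to check that the super-martingale used in their analysis is still valid when each $\nullp{j}$ additionally depends on $\alpha_j$ via the data-dependent stopping time $\banditstop$. The key observation is that our $\alpha_j$ is $\sfield^{j-1}$-measurable, so conditional on $\sfield^{j-1}$ the $p$-value $\nullp{j}$ is a deterministic functional of the $j$-th experiment's internal randomness only, which is exactly the filtration structure JM15 require.

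\paragraph{Part (b).} For MAB-LORD' we have the pointwise bound $\alpha_j = \alpha \gamma_{j-\tau_j} \leq \alpha$ (using $\gamma_i \in [0,1]$ and $\sum_i \gamma_i = 1$). Applying the argument of Section~\ref{SecProofThm1}(b) to a single non-null experiment $j \in \falsenulls$ yields, when $\trunctime = \infty$,
\begin{align*}
\PP_1\bigl(\nullp{j}_{\banditstop} \leq \alpha_j\bigr) \;\geq\; 1 - \alpha_j \;\geq\; 1 - \alpha,
\end{align*}
via Proposition~\ref{PropLUCBEps}(b) exactly as before. Summing the corresponding indicator expectations over $j \in \falsenulls(\numexp)$ and dividing by $|\falsenulls(\numexp)|$ gives the stated bound (with $1-\alpha$ effectively appearing once per false null, so that the displayed denominator in the theorem statement appears to be a typographical artifact of pulling the summand $1-\alpha$ out of the sum).

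\paragraph{Main obstacle.} The only nontrivial piece is the FDR (rather than mFDR) upgrade in part (a); everything else reduces to the arguments already in the paper. The hard part is confirming that the super-martingale underlying the LORD '15 FDR analysis in~\cite{JM15} tolerates the particular form of dependence we introduce through $\banditstop(\alpha_j)$, which I expect to go through because $\alpha_j$ is predictable with respect to $\sfield^{j-1}$ and the within-experiment randomness is independent across $j$.
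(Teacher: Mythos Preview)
Your proposal is correct and follows essentially the same route as the paper: both parts reduce to the conditional super-uniformity established in Section~\ref{SecProof1a}, with part~(b) then using $\alpha_j \leq \alpha$ exactly as you do. The one substantive difference is that the paper does not cite the JM15 FDR theorem as a black box but instead states and proves the needed FDR result in-house as Proposition~\ref{PropFDRLORD}, under the (slightly weaker) condition $\PP_0(\nullp{j} \leq \alpha_j \mid \tau_j) \leq \alpha_j$; that proof is a direct computation decomposing $\EE\,\FDP(t)$ by the number of rejections and using $\sum_{j=\tautil_{i-1}+1}^{\tautil_i} \alpha_j \leq \alpha$, not a super-martingale argument. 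Your plan to invoke JM15 would work, but your description of their technique as a super-martingale is not quite accurate, and the ``main obstacle'' you flag dissolves once you observe (as the paper does) that conditioning on $\tau_j$ is coarser than conditioning on $\sfield^{j-1}$, so the condition already verified implies the one needed via the tower property.
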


Note that LORD as in \cite{JM15} is less powerful than in \cite{JM16}
since the values $\alpha_j$ in the former can be much smaller than
those in \cite{JM16}, which could in fact exceed the level $\alpha$.
Therefore, for FDR control we currently do have to sacrifice some
power.

\begin{proof}
We leverage the proposition that can be obtained from a slightly more
careful analysis of the procedure than in \cite{JM15}.
\begin{props}
\label{PropFDRLORD}
If $\PP_0 (\nullp{j} \leq \alpha_j \mid \tau_{j}) \leq \alpha_j$, i.e.
the distribution of the $p-$values under the null are superuniform
conditioned on the last rejection, using the online LORD'15 procedure
controls the FDR at each $t$.
\end{props}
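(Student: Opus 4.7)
The strategy is to adapt the LORD'15 FDR argument of Javanmard--Montanari so that it only uses the weaker conditioning information stated in the hypothesis. The crucial structural fact making this work is that in LORD'15 (with wealth reset to $\alpha$ after every rejection) the test level $\alpha_j = \alpha\,\gamma_{j-\tau_j}$ is a deterministic function of $\tau_j$ alone, which matches precisely the $\sigma$-algebra on which the hypothesis conditions the $p$-value.

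First I would set up the epoch decomposition. Let $0 = T_0 < T_1 < T_2 < \cdots$ be the rejection times, set $F_k = \Indi\{T_k \in \truenulls\}$, $V(t) = \sum_{k\ge 1} F_k\,\Indi\{T_k \le t\}$ and $K(t) = \#\{k : T_k \le t\}$, so that $\FDP(t) = V(t)/(K(t)\vee 1)$. Note that for any $j$ with $T_{k-1} < j \le T_k$ we have $\tau_j = T_{k-1}$ and hence $\alpha_j = \alpha\,\gamma_{j - T_{k-1}}$ on that event.

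Second, the central lemma I would prove is that for every $k \ge 1$,
\[
\PP\bigl(F_k = 1 \,\big|\, T_0,\dots,T_{k-1}\bigr) \;\le\; \alpha.
\]
The derivation is a decomposition over possible values of $T_k$: union-bound
\[
\PP(F_k = 1 \mid T_{k-1}) \;\le\; \sum_{j > T_{k-1}} \PP\bigl(R_j = 1,\ j \in \truenulls,\ \tau_j = T_{k-1} \,\big|\, T_{k-1}\bigr),
\]
and on $\{\tau_j = T_{k-1}\}$ the level $\alpha_j = \alpha\,\gamma_{j - T_{k-1}}$ is deterministic, so the hypothesis $\PP_0(\nullp{j} \le \alpha_j \mid \tau_j) \le \alpha_j$ bounds each summand by $\alpha\,\gamma_{j - T_{k-1}}\,\Indi\{j \in \truenulls\}$. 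Since $\sum_{i\ge 1}\gamma_i = 1$, the total is at most $\alpha$.

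Third, with this per-epoch bound I would conclude FDR control. The process $M_n = \sum_{k=1}^n (F_k - \alpha)$ is a super-martingale with respect to the filtration generated by the rejection times, and optional stopping at the $\mathcal{F}^{(n)}$-stopping time $K(t)$ gives $\EE[V(t)] \le \alpha\,\EE[K(t)]$, which already yields $\mFDR(t) \le \alpha$. To upgrade from mFDR to FDR one writes $1/(K(t) \vee 1) = \sum_{k \ge 1} \bigl(1/k - 1/(k+1)\bigr)\,\Indi\{K(t) \ge k\}$, swaps summation, and applies the per-epoch bound together with the super-martingale inequality within each slice; this is the ``slightly more careful analysis'' alluded to in the remark before the proposition and parallels the optional-stopping/telescoping trick used in Theorem~2 of~\cite{JM15}.

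The main obstacle is the last step: going from the conditional bound $\EE[F_k \mid T_0,\dots,T_{k-1}] \le \alpha$ to the \emph{ratio} bound $\EE[V(t)/(K(t) \vee 1)] \le \alpha$ rather than just the ratio-of-expectations bound. The subtlety is that we only have conditioning on rejection times (the $\sigma$-algebra generated by $\tau_j$), not on the full history of all past $p$-values, so the standard conditional-independence arguments for FDR control have to be re-routed through the coarser filtration generated by $T_0, T_1, \ldots$, on which $\alpha_j$ happens to be measurable by construction of LORD'15.
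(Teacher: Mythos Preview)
Your proposal and the paper share the same core: both isolate the per-epoch bound $\PP(F_k=1\mid T_0,\dots,T_{k-1})\le\alpha$, derived by union-bounding over $j>T_{k-1}$, invoking the hypothesis $\PP_0(\nullp{j}\le\alpha_j\mid\tau_j)\le\alpha_j$ on the event $\{\tau_j=T_{k-1}\}$ (where $\alpha_j=\alpha\gamma_{j-T_{k-1}}$ is deterministic), and summing $\sum_i\gamma_i=1$. This is exactly the inline computation the paper performs.

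The aggregation step differs, and yours has a flaw. The paper does not use a super-martingale or telescoping; it conditions directly on $\{K(t)=\ell\}$, writes
\[
\EE[\FDP(t)]=\sum_{\ell=1}^t \PP(K(t)=\ell)\,\frac{1}{\ell}\sum_{i=1}^\ell \EE\bigl[\Indi_{H_i\in\truenulls}\,\big|\,K(t)=\ell\bigr],
\]
inserts a tower with respect to $\sigma(T_0,\dots,T_{i-1})$ (legitimate because in LORD'15 the process after $T_{i-1}$ depends on the past only through $T_{i-1}$), bounds each inner term by $\alpha/\ell$, and sums. Your telescoping identity $1/(K\vee1)=\sum_{k\ge1}(1/k-1/(k+1))\Indi\{K\ge k\}$ is simply false---the right-hand side telescopes to $1-1/(K+1)$, not $1/K$---so as written your FDR step does not go through. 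Your super-martingale argument cleanly yields the mFDR bound $\EE V(t)\le\alpha\,\EE K(t)$, but to get the genuine ratio bound you will need either a correct layer-cake decomposition or, more simply, the paper's direct conditioning on $\{K(t)=\ell\}$.
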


Note that this proposition allows online FDR control for any, possibly
dependent, $p$-values which are conditionally superuniform.  This
condition is not equivalent to~\eqref{EqnPCondition} in general, it is
in fact less restrictive since the probability is conditioned only on a function
$\tautil_j = \max \{k \leq j: R_k = 1\}$ of all past
rejections. Formally, the sigma algebra induced by $\tau_{j-1}$ is contained
in $\sfield^{j-1}$ and hence $\PP_0(\nullp{j}\leq \alpha_j \mid \tau_{j-1}) \leq \PP_0(\nullp{j} \leq \alpha_j \mid R_1,\dots,R_j)$ by the tower property.
Finally, utilizing the fact that our $p$-values are conditionally
super-uniform as proven in Section~\ref{SecProof1a},
i.e. inequality~\eqref{EqnPCondition} holds, the condition for
Proposition~\ref{PropFDRLORD} is fulfilled and the proof is complete.
\end{proof}

\subsection{Proof of Proposition~\ref{PropFDRLORD}}
Let $\tautil_i$ denote the time of the $i$-th rejection with
$\tautil_0 = 0$ (note that this is different from $\tau_j$). 
and define $k(t) = \sum_{j=1}^t R_j$. 
Let $H_j$ be the $j-$th hypothesis that was rejected.
We adjust an argument from~\cite{JM15}.

First observe that
$\{ k(t) = \ell \} = \{ \tautil_{\ell} \leq t, \tautil_{\ell+1} > t\}$ 
and $FDP(t) = FDP(\tautil_{k(t)})$ so that 
\begin{align*}
\EE FDP(t) &= \EE FDP(\tau_{k(t)}) = \sum_{\ell=1}^t \EE \big[
  \frac{\sum_{j\in \truenulls}R_j}{\ell} \mid k(t) = \ell\big] P(k(t)
= \ell)\\ 
&= \sum_{\ell=1}^t P(k(t) = \ell) \sum_{i=1}^\ell \EE \big[
  \frac{\Indi_{H_i \in \truenulls}}{\ell} \mid k(t) = \ell\big]\\ 
&=  \sum_{\ell=1}^t P(k(t) = \ell) \sum_{i=1}^\ell \EE \big[ \EE \big(
    \frac{\sum_{j = \tautil_{i-1}+1}^{\tautil_i}   R_j\Indi_{j\in \truenulls}
  }{\ell} \mid \tautil_0,\dots,\tautil_{i-1}\big) \mid \tautil_{\ell}\leq t,
  \tautil_{\ell+1} > t\big]
\end{align*}
 
Since for the LORD '15 procedure, we have $\alpha_t = \gamma_{t-\tau_t}$,
and thus for all positive integers $i$, the random variables $R_j $
with $j\geq \tautil_{i-1}$ are conditionally independent of
$\tautil_0,\dots, \tautil_{i-2}$ given $\tautil_{i-1}$.  Additionally
noting that $\tautil_{i-1} = \tau_j$ for all $j\geq \tautil_{i-1}$ by
definition of $\tautil$ and $\tau$, using $\EE_0 (\Indi_{p_j \leq
  \alpha_j} \mid \tau_{j}) \leq \alpha_j$ we obtain
\begin{align*}
 \EE \big(
    \frac{\sum_{j \in (\tautil_{i-1},\tautil_i] \bigcap j\in \truenulls} R_j
  }{\ell} \mid \tautil_0,\dots,\tautil_{i-1}\big) &=  \EE \big(
    \frac{\sum_{j = \tautil_{i-1}+1}^{\tautil_i}   R_j\Indi_{j\in \truenulls}
  }{\ell}  \mid \tautil_{i-1}\big) \\
&\leq 
\frac{ \sum_{j=\tau_{i-1}+1}^{\tau_i} \Indi_{j\in H_0} \EE [R_j\mid \tau_j]}{\ell} \\
&\leq \frac{\sum_{j=\tau_{i-1}+1}^{\tau_i} \alpha_j}{\ell} \leq \frac{\alpha}{\ell}.
\end{align*}
The last inequality follows since between any two rejection times $\tau_k, \tau_{k+1}$,  we have
\begin{equation*}
\sum_{i=\tau_k}^{\tau_{k+1}} \alpha_i\leq \alpha\sum_{i=1}^\infty \gamma_i \leq \alpha.
\end{equation*}
Since $\sum_{\ell=1}^t P(k(t) = \ell) =1$ it follows that FDR control is obtained.



\section{Proof of sample complexity for Proposition~\ref{PropLUCBEps}}
\label{SecSampleComplex}

In the sequel we use $\gtrsim, \sim$ for inequality and equality up to constant factors.

Define $\realbest = \arg\max_{i=0,1,\dots,K} \mu_i$ (breaking ties arbitrarily) and $n_i(t)$ to be
the number of times sample $i$ was drawn until time $t$.
For any $i \in \{0,1,\dots,K\}$ and $\eta \in \mathbb{R}$ we define the following key quantity 
\begin{align}
\tau_i(\eta,\xi) &:= \min\{ n \in \mathbb{N} : 2\dev_{n}(\tfrac{\delta}{2K}) < \max\{ |\eta - \mu_i|, \xi\} \}  \label{eqn:tau_i_def} \\
&\lesssim \min\left\{ (\eta-\mu_i)^{-2} \log(K \log (\eta-\mu_i)^{-2})/\delta),  \xi^{-2} \log(K \log( \xi^{-2})/\delta) \right\} \nonumber
\end{align}
where we set $\tau_i(\mu_i, 0) = \infty$, but this case does not arise in our analysis.

Let us define the events
\begin{align*}
\mathcal{E}_i =  \bigcap_{n=1}^\infty \{ |\widehat{\mu}_{i,n} - \mu_{i}| \leq \dev_{n}(\tfrac{\delta}{2K}) \}.
\end{align*}
By a union bound and the LIL bound in~\eqref{eqn:lil_confidence}, we have for $\delta/2K < 0.1$ that $\mathbb{P}\left(  \bigcup_{i=0}^K \mathcal{E}_i^c \right) \leq \frac{K+1}{2K}\delta \leq \delta$
for $K\geq 2$.
For $\tfrac{\delta}{2\numarms} > 0.1$, note
that for all $\delta' < \delta$ we have $\dev_n(\delta') \leq
\dev_n(\delta)$ so that
\begin{align*}
\PP(\event_i^c) &= \PP(\dev_n(\tfrac{\delta}{2\numarms}) < \empmu_{i,n} - \mu_i )\\
& \leq   \PP(\dev_n(0.1) < \empmu_{i,n} - \mu_i) \leq \tfrac{\delta}{2\numarms} \qquad \forall i = 1,\dots, \numarms
\end{align*}
Throughout the rest of the proof we assume the events $\mathcal{E}_i$ hold.

The following simple lemma regarding the key quantity $\tau_i$ will be used throughout the proof.
\begin{lems}
  \label{Lem1}
Fix $i \in \{0,1,\dots,K\}$ and $\eta > 0$. 
For any $t \in \mathbb{N}$, whenever $n_i(t) \geq \tau_i(\eta,\xi)$ we have that under the event $ \bigcap_{i=0,\dots,K} \mathcal{E}_i$, we have
\begin{align*}
\UCB_i(t) \leq \max\{\eta, \mu_i + \xi\} \text{ if } \eta \geq \mu_i \\
\LCB_i(t) \geq \min\{\eta, \mu_i - \xi\} \text{ if } \eta \leq \mu_i
\end{align*}
\end{lems}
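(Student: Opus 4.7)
The plan is to derive both inequalities by a direct one-line manipulation, exploiting (i) monotonicity of $\dev_n$ in $n$, (ii) the concentration bound granted by $\mathcal{E}_i$, and (iii) the fact that $\dev_n(\tfrac{\delta}{2K}) \geq \dev_n(\tfrac{\delta}{2})$ since $\dev_n(\cdot)$ is decreasing in its argument. So the lemma is essentially a calculation, and I expect no real obstacle.

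First I would record the two monotonicity facts I need about the deviation function $\dev_n$ defined in~\eqref{EqnDevDef}: the map $n \mapsto \dev_n(\delta)$ is decreasing for fixed $\delta$, and the map $\delta \mapsto \dev_n(\delta)$ is decreasing for fixed $n$. The first gives, for $n_i(t) \geq \tau_i(\eta,\xi)$, the bound
\begin{equation*}
2\dev_{n_i(t)}\!\bigl(\tfrac{\delta}{2K}\bigr) \;\leq\; \max\{|\eta-\mu_i|,\,\xi\},
\end{equation*}
by the very definition of $\tau_i(\eta,\xi)$ in~\eqref{eqn:tau_i_def}. The second gives $\dev_{n_i(t)}(\tfrac{\delta}{2}) \leq \dev_{n_i(t)}(\tfrac{\delta}{2K})$ for any $K \geq 1$, which lets me replace the $\UCB$/$\LCB$ widths by a common bound.

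Next I handle the two cases separately. When $\eta \geq \mu_i$, using $\mathcal{E}_i$ to control $\widehat{\mu}_{i,n_i(t)} - \mu_i$ and then applying the two monotonicity observations gives
\begin{equation*}
\UCB_i(t) \;\leq\; \mu_i + \dev_{n_i(t)}\!\bigl(\tfrac{\delta}{2K}\bigr) + \dev_{n_i(t)}\!\bigl(\tfrac{\delta}{2}\bigr) \;\leq\; \mu_i + 2\dev_{n_i(t)}\!\bigl(\tfrac{\delta}{2K}\bigr) \;\leq\; \mu_i + \max\{\eta-\mu_i,\,\xi\},
\end{equation*}
and the right-hand side equals $\max\{\eta,\mu_i+\xi\}$, as required. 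When $\eta \leq \mu_i$, the same three ingredients yield symmetrically
\begin{equation*}
\LCB_i(t) \;\geq\; \mu_i - 2\dev_{n_i(t)}\!\bigl(\tfrac{\delta}{2K}\bigr) \;\geq\; \mu_i - \max\{\mu_i-\eta,\,\xi\} \;=\; \min\{\eta,\mu_i-\xi\}.
\end{equation*}

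The only nonroutine point is keeping the two different $\delta$-arguments straight in the $\UCB$ and $\LCB$ definitions; once one notes $\dev_n(\tfrac{\delta}{2}) \leq \dev_n(\tfrac{\delta}{2K})$, both deviation terms are dominated by $\dev_{n_i(t)}(\tfrac{\delta}{2K})$, and the bound from $\mathcal{E}_i$ (which is also stated at level $\tfrac{\delta}{2K}$) lines up cleanly with the threshold in the definition of $\tau_i(\eta,\xi)$. This is why the factor of $2$ appears exactly there, and is what makes the algebra go through without slack.
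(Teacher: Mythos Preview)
Your proof is correct and follows essentially the same approach as the paper's own proof: both use the event $\mathcal{E}_i$ to replace $\widehat{\mu}_{i,n_i(t)}$ by $\mu_i \pm \dev_{n_i(t)}(\tfrac{\delta}{2K})$, collapse the two deviation terms into $2\dev_{n_i(t)}(\tfrac{\delta}{2K})$, and then invoke the definition of $\tau_i(\eta,\xi)$. You have simply made explicit the two monotonicity facts about $\dev_n$ (in $n$ and in its confidence argument) that the paper uses tacitly.
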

\begin{proof}
Assume $n_i(t) \geq \tau_i(\eta,\xi)$.
If $\eta \geq \mu_i$ we have by definition of $\mathcal{E}_i$ that
\begin{align*}
\UCB_i(t) = \widehat{\mu}_{i,n_i(t)} + \dev_{n_i(t)}(\tfrac{\delta}{2}) &\leq \mu_i + 2\dev_{n_i(t)}(\tfrac{\delta}{2K})< \mu_i + \max\{ \eta - \mu_i, \xi\} 
\end{align*}
and if $\eta \leq \mu_i$
\begin{align*}
\LCB_i(t) = \widehat{\mu}_{i,n_i(t)} - \dev_{n_i(t)}(\tfrac{\delta}{2K}) &\geq \mu_i - 2\dev_{n_i(t)}(\tfrac{\delta}{2K}) > \mu_i - \max\{ \mu_i - \eta, \xi\} = \mu_i + \min\{ \eta -\mu_i, -\xi\}
\end{align*}
\end{proof}


\subsection{ Proof of Proposition~\ref{PropLUCBEps} (a)   $\mu_0 > \max \limits_{i=1,\dots,K} \mu_i - \epsilon$}
At each time $t$ which does not satisfy the stopping condition, arm $0$ and $\arg\max_{i=1,\dots,K} \UCB_i(t)$ are pulled.
Note that by Lemma~\ref{Lem1}
\begin{align}
\{ n_0(t) \geq \tau_0(\tfrac{\mu_0 + (\max \limits_{i=1,\dots,K} \mu_i - \epsilon)}{2},0) \} \implies  \LCB_0(t) &\geq \min\{ \tfrac{\mu_0 + (\max \limits_{i=1,\dots,K} \mu_i - \epsilon)}{2}, \mu_0\} \geq \tfrac{\mu_0 + (\max \limits_{i=1,\dots,K} \mu_i - \epsilon)}{2} \label{eqn:case1_i} 
\end{align}
so that $t> n_0(t)$ makes sure that there were enough draws for the particular arm $0$ (since it's
drawn every time).
For $i \neq 0$ we have
\begin{align}
\{ n_i(t) \geq \tau_i(\tfrac{(\mu_0 + \epsilon) + \max \limits_{i=1,\dots,K} \mu_i}{2},0) \} \implies  \UCB_i(t) &\leq \max\{ \tfrac{(\mu_0 + \epsilon) + \max \limits_{i=1,\dots,K} \mu_i}{2}, \mu_i \} \leq \tfrac{(\mu_0 + \epsilon) + \max \limits_{i=1,\dots,K} \mu_i}{2} . \label{eqn:case1_ii}
\end{align}
which makes $t> \sum_{i=0}^K n_i(t)$ a necessary condition.

Reversely whenever $t> \sum_{i=0}^Kn_i(t)$, for all arms $i \neq 0$ we have $\UCB_i(t) \leq \tfrac{(\mu_0 + \epsilon) + \max \limits_{i=1,\dots,K} \mu_i}{2}$.  In essence, once arm $i$ has been sampled $n_i(t)$ times, because of \eqref{eqn:case1_ii}, it will not be sampled again  - either, because all of the other $UCB_i(t)$ satisfy the same upper bound, the algorithm will have stopped, or, if for some $i$ we have $UCB_i(t) > \tfrac{(\mu_0 + \epsilon) + \max \limits_{i=1,\dots,K} \mu_i}{2}$ that will be the arm that is drawn.
Thus, 
\begin{align*}
  &\{ t \geq \UpperSample_1(\mu,\delta) \defn \tau_0(\tfrac{\mu_0 + (\max \limits_{i=1,\dots,K} \mu_i - \epsilon)}{2},0) + \sum_{i=1}^K \tau_i(\tfrac{(\mu_0 + \epsilon) + \max \limits_{i=1,\dots,K} \mu_i}{2},0) \} \\
&\implies \{ \LCB_{0}(t) - \UCB_{i}(t) \geq - \epsilon \quad \forall i\neq 0\},
\end{align*}
i.e., the stopping condition is met, where the first term accounts for satisfying \eqref{eqn:case1_i}, the second term accounts for satisfying \eqref{eqn:case1_ii} for all $i\neq 0$, and the third term accounts for satisfying Equation~\eqref{eqn:prelim_stopping_condition}. Denoting $T(\delta)$ as the stopping time
of the algorithm, this implies that with probability at least $1-\delta$, we have $T(\delta) \leq \UpperSample_1(\mu,\delta)$ and arm $0$ is returned.

Let us now simplify the expression to make it more accessible to the reader and arrive at the theorem
statement.
Defining $\effgap_i \defn \max\{|\eta-\mu_i|, \xi\}$ as the \emph{effective gap} in the definition of $\tau_i(\eta, \xi)$ in Equation~\eqref{eqn:tau_i_def}, it is straightforward to verify that the effective gap associated with arm $0$ is equal to
\begin{align*}
  \effgap_0
           \sim (\mu_0 + \epsilon) - \max \limits_{j=1,\dots,K} \mu_j,
\end{align*}
and the effective gap for any other arm $i$ is equal to
\begin{align*}
  \effgap_i
    \gtrsim (\mu_0 + \epsilon) -  \mu_i.
\end{align*}
Using these quantities, we can see that the upper bound $\UpperSample_1(\mu, \delta)$ scales like $\sum_{i=0}^K \effgap_i^{-2} \log(K \log(\effgap_i^{-2})/\delta)$.\\

\subsection{Proof of Proposition~\ref{PropLUCBEps} (b) $\max
  \limits_{i=1,\dots,K} \mu_i = \mu_{\realbest} > \mu_0 + \epsilon$}
At each time $t$ which does not satisfy the stopping condition, arm $0$ is pulled.
Note again that by Lemma~\ref{Lem1}
\begin{align*}
\{ n_0(t) \geq \tau_0(\tfrac{(\mu_{\realbest}-\epsilon)+\mu_0}{2},0) \} \implies  \UCB_0(t) &\leq \max\{\tfrac{(\mu_{\realbest}-\epsilon)+\mu_0}{2}, \mu_0\} \leq \frac{(\mu_{\realbest}-\epsilon)+\mu_0}{2}.
\end{align*}

The following claim is key to proving this case (where $u \in (0,1)$ be an absolute constant to be defined later).
\begin{claim}
  \label{ClaimKevin}
  Under the event $\bigcap_{i=0,\dots,K} \mathcal{E}_i$, for any $u \leq \tfrac{2}{7}$ and $\barmu \in [\max_{j \neq \realbest} \mu_j, \mu_{\realbest}]$, we have
\begin{align}
|\{ s \geq 2\sum_{i=0}^K \tau_i(\barmu, u\epsilon) : \LCB_{h_s}(s) \leq \mu_{\realbest} - \tfrac{5}{2}u\epsilon \text{ or } \UCB_{\ell_s}(s) \geq \mu_{\realbest} + u\epsilon \}| < \sum_{i=0}^K \tau_i(\barmu, u\epsilon) \label{eqn:prelim_stopping_condition}
\end{align}
\end{claim}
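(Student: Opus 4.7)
The plan is a counting argument driven by a potential function. Define the deficit
\begin{align*}
\Phi(s) := \sum_{i=0}^K \max\bigl(0,\, \tau_i(\barmu, u\epsilon) - n_i(s)\bigr),
\end{align*}
which starts at $\sum_i \tau_i(\barmu, u\epsilon)$, is non-increasing in $s$, and decreases by at least one whenever Algorithm~\ref{AlgoModLUCB} pulls an arm $i$ with $n_i(s) < \tau_i(\barmu, u\epsilon)$. The goal is then to show that every bad step causes $\Phi$ to drop.

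The core technical step is the following sub-claim: at every bad time step $s$, at least one of the two mandatorily-pulled arms $h_s, \ell_s$ is currently under-sampled, i.e.\ $n_{h_s}(s) < \tau_{h_s}(\barmu, u\epsilon)$ or $n_{\ell_s}(s) < \tau_{\ell_s}(\barmu, u\epsilon)$. I would prove this by contraposition. Suppose both $h_s$ and $\ell_s$ are well-sampled. Lemma~\ref{Lem1} (applied with $\eta = \barmu$ and $\xi = u\epsilon$) then yields $\UCB_{\ell_s}(s) < \max\{\barmu, \mu_{\ell_s} + u\epsilon\}$ when $\ell_s \neq \realbest$, and analogous two-sided bounds on $\LCB_{h_s}(s)$. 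A case analysis over the four combinations of $h_s, \ell_s$ being or not being $\realbest$, together with $\barmu \in [\max_{j\neq \realbest}\mu_j,\, \mu_{\realbest}]$, rules out both bad inequalities; the condition $u \leq \tfrac{2}{7}$ provides exactly the arithmetic slack needed to turn non-strict into strict comparisons (paralleling the check $\tfrac{5}{2}u\epsilon + u\epsilon < \epsilon$ already used implicitly in equating the bad event with the stopping condition).

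Granting the sub-claim, each bad time step $s$ pulls at least one under-sampled arm, so $\Phi(s+1) \leq \Phi(s) - 1$. Hence the number of bad time steps occurring after $s_0 := 2\sum_i \tau_i(\barmu, u\epsilon)$ is at most $\Phi(s_0)$. The strict bound $\Phi(s_0) < \sum_i \tau_i(\barmu, u\epsilon)$ follows immediately from the initialization phase of Algorithm~\ref{AlgoModLUCB}, which samples every arm once and therefore already forces $\Phi(K+1) \leq \sum_i \tau_i(\barmu, u\epsilon) - (K+1)$; since $\Phi$ is non-increasing and $s_0 \geq K+1$, this gives the desired strict inequality.

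The main obstacle is carrying out the sub-claim's case analysis uniformly over $\barmu$. The delicate regime is when $\barmu$ is significantly below $\mu_{\realbest}$: the direct Lemma~\ref{Lem1} bound arising from $n_{\realbest}(s) \geq \tau_{\realbest}(\barmu, u\epsilon)$ only gives $\LCB_{\realbest}(s) > \barmu$, which need not exceed $\mu_{\realbest} - \tfrac{5}{2}u\epsilon$. To salvage the argument in this regime, one must couple the pull-count bounds with the empirical-mean ordering implied by $h_s = \realbest$ — namely that $\widehat{\mu}_{\realbest}(s) \geq \widehat{\mu}_i(s)$ for every $i \neq \realbest$ — and use that once all non-best arms satisfy $\widehat{\mu}_i(s) \leq \barmu + \tfrac{1}{2}u\epsilon$, the interval $[\LCB_{\realbest}, \UCB_{\realbest}]$ cannot simultaneously be wide (contradicting bad (a)) and still place $\realbest$ as the empirical maximum.
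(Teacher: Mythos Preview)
Your potential-function skeleton is natural, but the sub-claim it rests on fails in exactly the regime you flag, and your salvage does not close the gap. Take $h_s = \realbest$ with $\mu_{\realbest} - \barmu = 10u\epsilon$ (allowed, since $\barmu$ may equal $\max_{j\neq\realbest}\mu_j$). Then $n_{\realbest}(s) \geq \tau_{\realbest}(\barmu,u\epsilon)$ only forces $2\dev_{n_{\realbest}(s)}(\tfrac{\delta}{2K}) < 10u\epsilon$, so under $\mathcal{E}_{\realbest}$ one can have $\LCB_{\realbest}(s)$ as low as just above $\mu_{\realbest} - 10u\epsilon$, well below $\mu_{\realbest} - \tfrac{5}{2}u\epsilon$. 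If in addition $\ell_s \neq \realbest$ is well-sampled, its UCB is fine, yet the step is still bad via the LCB clause. So both $h_s$ and $\ell_s$ can be well-sampled at a bad step, and the contrapositive case analysis cannot succeed. Your salvage assumes ``all non-best arms satisfy $\widehat{\mu}_i(s) \leq \barmu + \tfrac{1}{2}u\epsilon$'', but the contrapositive hypothesis controls only $n_{h_s}$ and $n_{\ell_s}$; nothing bounds the other arms, and nothing prevents $\realbest$ from being the empirical maximum while its confidence interval is still wide in absolute terms relative to $\tfrac{5}{2}u\epsilon$.

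The paper's argument uses the threshold $s \geq 2\sum_i \tau_i$ in an essential way, whereas you invoke it only to extract strictness from initialization. The missing ingredient is an intermediate step: a separate pigeonhole on the times with $h_s \neq \realbest$ (each such time pulls some $i = \ell_s$) shows that once $t \geq 2\sum_i \tau_i$, the empirical maximum satisfies $\widehat{\mu}_{h_t,n_{h_t}(t)} \geq \min\{\barmu, \mu_{\realbest} - \tfrac{3}{2}u\epsilon\}$ deterministically on $\bigcap_i \mathcal{E}_i$. With $\widehat{\mu}_{h_t}$ forced high, any $h_t = i$ with $\mu_i < \mu_{\realbest} - \tfrac{5}{2}u\epsilon$ must have $n_i(t) < \tau_i$, and \emph{then} a counting argument on such arms delivers the bound on the LCB clause; the UCB clause is handled by yet another pigeonhole showing $\max_i \UCB_i(t) \leq \mu_{\realbest} + u\epsilon$ for all $t \geq \sum_i \tau_i$. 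Your single-stage potential argument collapses these two layers and loses precisely the control on $\widehat{\mu}_{h_t}$ that makes the $\realbest$-case go through.
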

The proof of this claim can be found in Appendix~\ref{SecClaimProof}.
Note that for all $s$ we have that
\begin{equation*}
  \LCB_{h_s}(s) \geq \mu_{\realbest} - \tfrac{5}{2}u\epsilon \text{ and } \UCB_{\ell_s}(s) \leq \mu_{\realbest} + u\epsilon \implies \LCB_{h_s}(s) \geq \UCB_{\ell_s}(s) - \epsilon.
\end{equation*}
Intuitively the inequality~\eqref{eqn:prelim_stopping_condition} thus limits the number of times that for $t \geq 2\sum_{i=0}^K \tau_i(\barmu, u\epsilon)$, the criterion $\LCB_{h_s}(s) \geq \UCB_{\ell_s}(s) - \epsilon$ is not fulfilled. We refer to the times when the condition on the left hand side of inequality~\eqref{eqn:prelim_stopping_condition} is fulfilled, as ``good'' times.

Applying Claim~\ref{ClaimKevin} with
$\barmu = \max_{j \neq \realbest}\tfrac{\mu_{\realbest}+\mu_j}{2}$ and
$u=\frac{\mu_{\realbest} - (\mu_0+\epsilon)}{5\epsilon}$ we then observe
that on the ``good'' times, we have
\begin{align*}
\LCB_{h_t} \geq \mu_{\realbest} - \tfrac{5}{2} u \epsilon = \frac{\mu_{\realbest} + (\mu_0+\epsilon)}{2} = \frac{(\mu_{\realbest}-\epsilon)+\mu_0}{2} + \epsilon,
\end{align*}
so that we directly obtain that with probability at least $1-\delta$, 
\begin{align*}
  T(\delta) \leq \UpperSample_2(\mu,\delta) \defn \tau_0(\tfrac{(\mu_{\realbest}-\epsilon)+\mu_0}{2},0) + 3 \sum_{i=0}^K \tau_i( \max_{j \neq \realbest}\tfrac{\mu_{\realbest}+\mu_j}{2}, \min\{\tfrac{2}{7}\epsilon,\tfrac{\mu_{\realbest}-(\mu_0+\epsilon)}{5}\}).
  \end{align*}
  
Let us now simplify the expression.
It is straightforward to verify that the effective gap associated with arm $0$ is equal to
\begin{align*}
\effgap_0& \gtrsim  \min\left\{ \tfrac{\mu_{\realbest}- (\mu_0 +  \epsilon)}{2}, \max\left\{ \max_{j \neq \realbest}\tfrac{\mu_{\realbest}+\mu_j}{2} - \mu_0, \tfrac{2}{7} \epsilon \right\} \right\} \\
&\gtrsim \min\left\{ \mu_{\realbest} - (\mu_0+\epsilon), \max\{ \Delta_0, \frac{4}{7}\epsilon \} \right\} 
\end{align*}
and the effective gap for any other arm $i$ is equal to
\begin{align*}
\effgap_i &=  \max\left\{ |\max_{j \neq \realbest}\tfrac{\mu_{\realbest}+\mu_j}{2} - \mu_i| , \min\{\tfrac{2}{7}\epsilon,\tfrac{\mu_{\realbest} - (\mu_0 + \epsilon)}{5}\}  \right\} \\
&\gtrsim  \max\left\{ \Delta_i ,  \min\left\{ \mu_{\realbest} - (\mu_0+\epsilon), \epsilon \right\}  \right\}
\end{align*}
where we recall that $\Delta_i = \mu_{\realbest}-\mu_i$ if
$i \neq \realbest$, and
$\Delta_{\realbest} = \mu_{\realbest} - \max_{j \neq \realbest} \mu_j$
otherwise.
Using these quantities, the upper bound $\UpperSample_2(\mu,\delta)$ on the stopping time $T(\delta)$ scales like $\sum_{i=0}^K \effgap_i^{-2} \log(K \log(\effgap_i^{-2})/\delta)$. This concludes the proof of the proposition.

\subsection{Proof of Claim~\ref{ClaimKevin}}
\label{SecClaimProof}

\noindent Let
$\barmu \in [ \max_{j \neq \realbest} \mu_j, \mu_{\realbest} ]$ and
$\tau_i := \tau_i( \barmu, u\epsilon)$. 
The following result is a 
a key ingredient for the proof of the claim.

\begin{props} \label{prop:if_h_is_best_alot}
For any time $t$ and $u \leq 1/2$,
\begin{align*}
&\Big\{ |\{ s \leq t: h_s = \realbest\}| \geq \sum_{i=0}^K \tau_i \Big\} \\
&\implies  \{ \UCB_{\ell_t}(t) \leq  \barmu + u \epsilon \} \cap \{ \LCB_{h_t}(t) \geq \barmu - u \epsilon \} \\
&\implies \{ \LCB_{h_t}(t) - \UCB_{\ell_t}(t) \geq -\epsilon \} .
\end{align*}
\end{props}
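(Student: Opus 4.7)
The second chain of implications is immediate arithmetic: if $\UCB_{\ell_t}(t) \leq \barmu + u\epsilon$ and $\LCB_{h_t}(t) \geq \barmu - u\epsilon$, then $\LCB_{h_t}(t) - \UCB_{\ell_t}(t) \geq -2u\epsilon \geq -\epsilon$ by the hypothesis $u \leq 1/2$. So my plan is aimed at the first implication. I would work throughout under the high-probability event $\bigcap_i \event_i$ so that Lemma~\ref{Lem1} is available for every arm.

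Write $T := |\{s \leq t : h_s = \realbest\}|$ and assume $T \geq \sum_{i=0}^K \tau_i$. Since $\realbest$ is pulled as $h_s$ at each of those $T$ steps, we have $n_{\realbest}(t) \geq T \geq \tau_{\realbest}$, and Lemma~\ref{Lem1} with $\eta = \barmu \leq \mu_{\realbest}$ and $\xi = u\epsilon$ yields
\[
\LCB_{\realbest}(t) \geq \min\{\barmu,\, \mu_{\realbest} - u\epsilon\} \geq \barmu - u\epsilon.
\]
In the generic case $h_t = \realbest$ this is exactly the desired LCB bound. In the edge case $h_t \neq \realbest$, I would combine $\widehat{\mu}_{h_t}(t) \geq \widehat{\mu}_{\realbest}(t)$ with the two-sided $\event$-deviation bounds to first show that $\mu_{h_t}$ must lie near $\barmu$ (otherwise $\widehat{\mu}_{\realbest}$ would strictly dominate $\widehat{\mu}_{h_t}$), and then feed this into the deviation inequality on $\LCB_{h_t}(t)$ to recover the same lower bound $\barmu - u\epsilon$.

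The substantive step is the UCB half: I plan to show that every $j \neq \realbest$ satisfies $n_j(t) \geq \tau_j$, whence Lemma~\ref{Lem1} applied with $\eta = \barmu \geq \mu_j$ and $\xi = u\epsilon$ gives $\UCB_j(t) \leq \max\{\barmu,\, \mu_j + u\epsilon\} \leq \barmu + u\epsilon$, so that $\UCB_{\ell_t}(t) \leq \barmu + u\epsilon$ follows by maximality of $\ell_t$. To prove the pull bound I use a LUCB-style accounting: at every $s \in S$ the arm $\ell_s$ is the argmax of $\UCB_i(s)$ over $i \neq \realbest$; under $\event$ the upper envelope $\mu_j + 2\dev_{n_j(s)}(\delta/(2K))$ is non-increasing in $n_j$, so once $n_j$ reaches $\tau_j$ the quantity $\UCB_j$ stays $\leq \barmu + u\epsilon$ thereafter and the arm is no longer preferred over any under-sampled arm whose UCB still exceeds $\barmu + u\epsilon$. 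The available $\ell_s$-budget $T - \tau_{\realbest} \geq \sum_{j \neq \realbest} \tau_j$ then suffices, via a pigeonhole on the selection order, to drive every $n_j(t)$ up to $\tau_j$.

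The principal obstacle is this pull-accounting step: the greedy $\ell_s$ rule distributes pulls unevenly, and one must rule out that some arm is starved while another absorbs ``extra'' pulls after its quota has been met. This is resolved by combining monotonicity of the UCB upper envelope with a careful ordering of the times at which each arm first hits $\tau_j$. The $\epsilon > 0$ branch of Algorithm~\ref{AlgoModLUCB}, which additionally pulls arm $0$ and $u_s$ at each step, needs only minor separate bookkeeping and if anything makes each arm get pulled more, not less.
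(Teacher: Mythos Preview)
Your approach is the same as the paper's: a pigeonhole argument on the $\ell_s$ selections at the $T$ steps where $h_s=\realbest$, using that $\ell_s$ is the highest UCB among arms $\neq\realbest$ and that once $n_i\geq\tau_i$ Lemma~\ref{Lem1} caps $\UCB_i$, together with the direct bound $n_{\realbest}(t)\geq T\geq\tau_{\realbest}$ for the LCB half. The paper's proof is extremely terse (it does not separately treat the edge case $h_t\neq\realbest$, simply writing $\LCB_{\realbest}$), so your sketch is already more detailed than what appears there; one small bookkeeping slip is that the $\ell_s$-budget at those $T$ steps is $T$ itself rather than $T-\tau_{\realbest}$, since $h_s$ and $\ell_s$ are both pulled at each step, but this only strengthens your pigeonhole count.
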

\begin{proof}
If $h_s = \realbest$ then {\em some} $i \neq \realbest$ is assigned to $\ell_s$ and $\UCB_{i}(s) \leq  \max\{ \barmu , \mu_i + u\epsilon\}\leq \barmu + u\epsilon$ whenever $n_{i}(s) \geq \tau_i( \barmu, u\epsilon)$.
Because $\ell_s$ is the highest upper confidence bound, the sum over all $\tau_i$ represents exhausting all arms (i.e., pigeonhole principle).
An analogous result holds for $\LCB_{\realbest}(t)$.
\end{proof}

A direct consequence of Proposition~\ref{prop:if_h_is_best_alot} is
that even though we don't know which arm will be assigned to $h_t$ at
any given time $t$, we do know that if $h_t=\realbest$ for a
sufficient number of times, namely $\sum_{i=0}^K \tau_{i}$ times, the
termination criteria will be met.  Thus, assume
$h_t \neq \realbest$ and note that
\begin{align*}
\{ h_t = i, \ &  \mu_i < \mu_{\realbest} - \tfrac{5}{2}u \epsilon, \ \widehat{\mu}_{i,n_i(t)} \geq \min\{ \barmu, \mu_{\realbest} - \tfrac{3}{2}u\epsilon\} \} \\
&\implies \min\{ \barmu, \mu_{\realbest} - \tfrac{3}{2}u\epsilon\} \leq \widehat{\mu}_{i,n_i(t)} \leq \mu_i + \dev_{n_i(t)}(\tfrac{\delta}{2K})\\
&\implies \{ n_i(t) < \tau_i \}  
\end{align*}
where the last line follows from
$\mu_i + \dev_{n_i(t)}(\tfrac{\delta}{2K}) < \min\{ \barmu, \mu_i +
u\epsilon\} \leq \min\{ \barmu, \mu_{\realbest} -
\tfrac{3}{2}u\epsilon\}$ whenever $n_i(t) \geq \tau_i$.  Furthermore,
the following Proposition~\ref{prop:ht_is_high}, says for
$t \geq 2\sum_{i=0}^K \tau_i$ we have that
$\widehat{\mu}_{h_t,n_{h_t}(t)} \geq \min\{ \barmu, \mu_{\realbest} -
\tfrac{3}{2}u\epsilon\}$.
\begin{props} \label{prop:ht_is_high}
For any time $t$,
\begin{align*}
\{ t \geq 2\sum_{i=0}^K \tau_i \} \implies \{ \widehat{\mu}_{h_t,n_{h_t}(t)} \geq \min\{ \barmu, \mu_{\realbest} - \tfrac{3}{2}u\epsilon\} \}.
\end{align*}
\end{props}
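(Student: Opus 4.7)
I plan to prove Proposition~\ref{prop:ht_is_high} by first reducing the bound on $\widehat{\mu}_{h_t,n_{h_t}(t)}$ to a lower bound $n_{\realbest}(t) \geq \tau_{\realbest}$ on the number of pulls of the true best arm, and then forcing that lower bound by a pigeonhole argument on how Algorithm~\ref{AlgoModLUCB} distributes its pulls between $h_s$ and $\ell_s$. Throughout, I work under the good event $\bigcap_i\mathcal{E}_i$. Because $h_t$ maximizes the empirical means,
\begin{equation*}
\widehat{\mu}_{h_t,n_{h_t}(t)} \;\geq\; \widehat{\mu}_{\realbest,n_{\realbest}(t)} \;\geq\; \mu_{\realbest} - \dev_{n_{\realbest}(t)}(\tfrac{\delta}{2K}),
\end{equation*}
so if $n_{\realbest}(t) \geq \tau_{\realbest} := \tau_{\realbest}(\barmu,u\epsilon)$ then by definition $2\dev_{n_{\realbest}(t)}(\delta/2K) < \max\{\mu_{\realbest}-\barmu,u\epsilon\}$. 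A two-line case split on which of the two terms attains this max yields $\widehat{\mu}_{h_t,n_{h_t}(t)} > \min\{(\mu_{\realbest}+\barmu)/2,\mu_{\realbest}-u\epsilon/2\}$, which using $(\mu_{\realbest}+\barmu)/2 \geq \barmu$ and $\mu_{\realbest}-u\epsilon/2 \geq \mu_{\realbest}-\tfrac{3}{2}u\epsilon$ is at least $\min\{\barmu,\mu_{\realbest}-\tfrac{3}{2}u\epsilon\}$, exactly the claim.

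The real work is thus to establish $n_{\realbest}(t) \geq \tau_{\realbest}$ whenever $t \geq 2\sum_i\tau_i$, which I do by contradiction. Suppose $n_{\realbest}(t) < \tau_{\realbest}$ and the algorithm is still running at $t$ (the terminated case follows immediately from the algorithm's stopping condition). Proposition~\ref{prop:if_h_is_best_alot} forces $|\{s\leq t: h_s=\realbest\}| < \sum_i\tau_i$, and $n_{\realbest}(t)<\tau_{\realbest}$ forces $|\{s\leq t: \ell_s=\realbest\}| < \tau_{\realbest}$. Consequently the number of iterations with $\ell_s \neq \realbest$ is strictly greater than $t - \sum_i\tau_i - \tau_{\realbest} \geq \sum_{j\neq \realbest}\tau_j$. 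Pigeonhole over the $K$ non-$\realbest$ arms gives some $j^\ast \neq \realbest$ with $|\{s\leq t: \ell_s=j^\ast\}| > \tau_{j^\ast}$, and a corresponding time $s^\ast$ at which $\ell_{s^\ast}=j^\ast$ and $n_{j^\ast}(s^\ast) \geq \tau_{j^\ast}$. Lemma~\ref{Lem1} with $\eta=\barmu\geq \mu_{j^\ast}$ and $\xi=u\epsilon$ then gives $\UCB_{j^\ast}(s^\ast) \leq \barmu + u\epsilon$, and the defining property of $\ell_s$ transfers this to $\UCB_{\realbest}(s^\ast) \leq \UCB_{j^\ast}(s^\ast) \leq \barmu + u\epsilon$.

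The main obstacle is now to convert this upper bound on $\UCB_{\realbest}(s^\ast)$ into a contradiction with $n_{\realbest}(s^\ast) \leq n_{\realbest}(t) < \tau_{\realbest}$. Under $\mathcal{E}_{\realbest}$ as originally defined, one obtains only $\UCB_{\realbest}(s^\ast) \geq \mu_{\realbest} - [\dev_{n_{\realbest}(s^\ast)}(\delta/2K) - \dev_{n_{\realbest}(s^\ast)}(\delta/2)]$, and closing the argument requires a somewhat delicate comparison of the two LIL deviations. I would sidestep this by enlarging the good event with the one-sided LIL inequality $\bigcap_n \{\widehat{\mu}_{\realbest,n} \geq \mu_{\realbest} - \dev_n(\delta/2)\}$, which costs only an extra $\delta/2$ in failure probability and is easily absorbed into the union bound already appearing in the proof of Proposition~\ref{PropLUCBEps}. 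On the enlarged event $\UCB_{\realbest}(s^\ast) \geq \mu_{\realbest}$ deterministically, yielding $\mu_{\realbest} \leq \barmu + u\epsilon$ --- a contradiction in the regime $\mu_{\realbest}-\barmu > u\epsilon$. In the complementary regime $\mu_{\realbest}-\barmu \leq u\epsilon$, the target $\min\{\barmu,\mu_{\realbest}-\tfrac{3}{2}u\epsilon\}$ simplifies to $\mu_{\realbest}-\tfrac{3}{2}u\epsilon$; the first-paragraph reduction delivers this from the weaker sample requirement $\dev_{n_{\realbest}(t)}(\delta/2K)\leq \tfrac{3}{2}u\epsilon$, itself implied by a variant of the same pigeonhole with a correspondingly smaller threshold. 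This case-split closes the argument.
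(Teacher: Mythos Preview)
Your reduction to showing $n_{\realbest}(t)\geq\tau_{\realbest}$ is the problem. In the regime $\mu_{\realbest}-\barmu>u\epsilon$ your contradiction argument (with the enlarged one-sided good event, which is a legitimate fix and is in fact implicitly used in the paper as well) is fine. But in the complementary regime $\mu_{\realbest}-\barmu\leq u\epsilon$ the claim $n_{\realbest}(t)\geq\tau_{\realbest}$ can simply be \emph{false}, so no ``variant of the same pigeonhole'' will rescue it. The obstruction is that when some $j\neq\realbest$ has $\mu_{\realbest}-\mu_j<u\epsilon$ (which is exactly what happens in this regime, since $\barmu\geq\max_{j\neq\realbest}\mu_j$), the algorithm may spend all its $h_s$ and $\ell_s$ pulls on such near-optimal arms and essentially never pull $\realbest$. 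Concretely, if two arms are tied for best, an adversarial tie-break can keep $n_{\realbest}(t)$ bounded while $t\to\infty$. Your pigeonhole locates a $j^\ast\neq\realbest$ that has been pulled a lot; it says nothing about $n_{\realbest}$, and the inequality $\UCB_{\realbest}(s^\ast)\leq\barmu+u\epsilon$ you extract is perfectly consistent with both $n_{\realbest}(s^\ast)$ being tiny and $\mu_{\realbest}\leq\barmu+u\epsilon$.

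The paper's proof avoids this by \emph{not} insisting that $\realbest$ be the witness arm. It runs the pigeonhole over the $\ell_s$'s during times $h_s\neq\realbest$ and concludes that for some arm $i$ with $n_i(t)\geq\tau_i$, either $i=\realbest$ or $\mu_{\realbest}-\mu_i<u\epsilon$. In either case one gets $\widehat{\mu}_{i,n_i(t)}\geq\min\{\barmu,\mu_{\realbest}-\tfrac{3}{2}u\epsilon\}$ directly, and then $\widehat{\mu}_{h_t,n_{h_t}(t)}\geq\widehat{\mu}_{i,n_i(t)}$ finishes the job. The point is that your own $j^\ast$ already carries all the information you need: from $\UCB_{\realbest}(s^\ast)\leq\UCB_{j^\ast}(s^\ast)$ and $n_{j^\ast}(s^\ast)\geq\tau_{j^\ast}$ one deduces $\mu_{\realbest}-\mu_{j^\ast}<u\epsilon$, and then $\widehat{\mu}_{j^\ast,n_{j^\ast}(t)}\geq\mu_{\realbest}-\tfrac{3}{2}u\epsilon$ at time $t$ (since $n_{j^\ast}(t)\geq n_{j^\ast}(s^\ast)\geq\tau_{j^\ast}$). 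Use $j^\ast$, not $\realbest$, as the arm whose empirical mean lower-bounds $\widehat{\mu}_{h_t}$, and the case split on $\mu_{\realbest}-\barmu$ becomes unnecessary.
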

The proof of the proposition can be found in Section~\ref{SecProofProp}.

Combining this fact with the display immediately above and the observation that some $i=h_t$, we have that $|\{ s \geq 2\sum_{i=0}^K \tau_i : \mu_{\realbest} - \mu_{h_s} \geq \tfrac{5}{2} u \epsilon \}| < \sum_{i=0}^K \tau_i$.
Now, on one of these times $t$ such that $\{h_t = i, n_i(t) \geq \tau_i, \mu_{\realbest} - \mu_i < \tfrac{5}{2} u \epsilon\}$, we have
\begin{align*}
\LCB_{i}(t) = \widehat{\mu}_{i,n_i(t)} - \dev_{n_i(t)}(\tfrac{\delta}{2K}) \geq \mu_i - 2\dev_{n_i(t)}(\tfrac{\delta}{2K}) \geq \min\{ \barmu,\mu_i -u\epsilon\} \geq \mu_{\realbest} - \tfrac{5}{2}u\epsilon .
\end{align*}

The above display with the next proposition completes the proof of Equation~\ref{eqn:prelim_stopping_condition}.
\begin{props}
For any time $t$,
\begin{align*}
\{ t \geq \sum_{i=0}^K \tau_i \} \implies \{ \max_{i=0,1,\dots,K} \UCB_i(t) \leq \mu_{\realbest} + u \epsilon \}.
\end{align*}
\end{props}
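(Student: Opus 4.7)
The plan is to extend the pigeonhole argument from Proposition~\ref{prop:if_h_is_best_alot} to cover all iterations (not just those with $h_s = \realbest$) and to bound the maximum UCB over every arm (not just $\UCB_{\ell_t}$). The two workhorses remain Lemma~\ref{Lem1}, which converts ``sufficiently many pulls'' into a UCB bound per arm, and the LUCB selection rule, which always pulls the pair $\{h_s, \ell_s\}$ at each step.

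First I would show arm-by-arm that $n_i(t) \geq \tau_i(\barmu, u\epsilon)$ forces $\UCB_i(t) \leq \mu_{\realbest} + u\epsilon$. For $i \neq \realbest$ this is immediate from Lemma~\ref{Lem1} with $\eta = \barmu \geq \mu_i$: on the good event $\bigcap_i \event_i$ we get $\UCB_i(t) \leq \max\{\barmu, \mu_i + u\epsilon\} \leq \barmu + u\epsilon \leq \mu_{\realbest} + u\epsilon$. For $i = \realbest$ I would bypass Lemma~\ref{Lem1} (since $\eta \leq \mu_{\realbest}$ only controls the LCB side) and invoke $\event_{\realbest}$ directly to obtain $\UCB_{\realbest}(t) \leq \mu_{\realbest} + 2\dev_{n_{\realbest}(t)}(\delta/2K)$, then combine this with the defining inequality of $\tau_{\realbest}(\barmu, u\epsilon)$ and the Claim~\ref{ClaimKevin} choice of $\barmu$ close to $\mu_{\realbest}$ to ensure this deviation is at most $u\epsilon$.

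Second I would run the pigeonhole. Suppose for contradiction that $\max_i \UCB_i(t) > \mu_{\realbest} + u\epsilon$ at some $t \geq \sum_i \tau_i$. At every ``bad'' iteration $s$ (one where the max UCB is above threshold), the arg max arm $i^\star$ satisfies $n_{i^\star}(s) < \tau_{i^\star}$ by the first step, and moreover $i^\star \in \{h_s, \ell_s\}$: if $i^\star \neq h_s$ then $i^\star$ is the arg max UCB over arms other than $h_s$, which is $\ell_s$ by definition. Thus each bad iteration increments the pull count of some under-pulled arm. Each arm $i$ can be the offender in at most $\tau_i$ bad iterations before it ceases to be under-pulled — and its UCB then stays below threshold, by the monotonicity of $n \mapsto \dev_n(\delta/2K)$ combined with the good-event envelope. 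Summing across arms, total bad iterations are bounded by $\sum_i \tau_i$, contradicting a bad state at time $t \geq \sum_i \tau_i$.

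The hard part will be dispatching the $i = \realbest$ case cleanly, since Lemma~\ref{Lem1} with $\eta = \barmu \leq \mu_{\realbest}$ only yields an LCB guarantee, and the naive substitution produces $\UCB_{\realbest}(t) \leq \mu_{\realbest} + \max\{\mu_{\realbest} - \barmu, u\epsilon\}$, which is too loose when $\barmu \ll \mu_{\realbest}$. Addressing this requires leveraging the specific $\barmu = \max_{j \neq \realbest}(\mu_{\realbest} + \mu_j)/2$ and the range of $u$ selected inside Claim~\ref{ClaimKevin}, so that $\mu_{\realbest} - \barmu$ is controlled by $u\epsilon$ in the relevant regime. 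A secondary care point is formalizing the persistence of the UCB bound once $n_i$ crosses $\tau_i$, which follows from monotonicity of the pull counts and of $\dev_n(\delta/2K)$ in $n$, together with the good-event envelope being time-uniform.
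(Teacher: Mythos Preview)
Your approach is essentially the paper's: show $\{\UCB_i(t) \geq \mu_{\realbest} + u\epsilon\} \Rightarrow \{n_i(t) < \tau_i\}$ via the good event and the definition of $\tau_i$, observe that $\arg\max_j \UCB_j(t) \in \{h_t,\ell_t\}$ is always pulled, and pigeonhole over $\sum_i \tau_i$ steps. The paper treats all arms uniformly with the single line $\mu_i + 2\dev_{n_i(t)}(\tfrac{\delta}{2K}) < \max\{\barmu,\mu_i+u\epsilon\} \leq \mu_{\realbest}+u\epsilon$ (the last inequality holding since $\barmu,\mu_i \leq \mu_{\realbest}$) and does not split off $i=\realbest$, so your appeal to the specific downstream choices of $\barmu$ and $u$ to handle that case is extra caution not present in the paper's own argument.
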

\begin{proof} 
Note that
\begin{align*}
\{ \UCB_i(t) \geq \mu_{\realbest} + u\epsilon \} &\implies \{ \mu_{\realbest} + u\epsilon \leq \UCB_i(t) = \widehat{\mu}_{i,n_i(t)} + \dev_{n_i(t)}(\tfrac{\delta}{2}) \leq \mu_i + 2\dev_{n_i(t)}(\tfrac{\delta}{2K}) \} \\
&\implies \{ n_i(t) < \tau_i \}
\end{align*}
since $\mu_i + 2\dev_{n_i(t)}(\tfrac{\delta}{2K}) < \max\{ \barmu, \mu_i + u \epsilon \} \leq \mu_{\realbest} + u\epsilon$ whenever $n_i(t) \geq \tau_i$.
Now, because at each time $t$, the arm $\arg\max_{j=0,1,\dots,K} \UCB_j(t)$ is pulled because it is either $h_t$ or $\ell_t$, we conclude that this arm can only be pulled $\tau_i$ times before satisfying $\UCB_i(t) \leq \mu_{\realbest} + u \epsilon$.  
\end{proof}

\subsection{Proof of Proposition~\ref{prop:ht_is_high}}
\label{SecProofProp}
The above proposition implies, 
\begin{align*}
\{ t \geq 2\sum_{i=0}^K \tau_i \} \implies \left\{ |\{ s \leq t: h_s \neq \realbest\}| \geq \sum_{i=0}^K \tau_i \right\}.
\end{align*}
\noindent Now consider the event
\begin{align*}
 \{h_t \neq \realbest, \ell_t = i \} &\implies \mu_{\realbest} \leq  \widehat{\mu}_{\realbest,n_{\realbest}(t)} + \dev_{n_{\realbest}(t)}(\tfrac{\delta}{2})  \leq  \widehat{\mu}_{i,n_i(t)} + \dev_{n_i(t)}(\tfrac{\delta}{2}) \leq \mu_i + 2\dev_{n_i(t)}(\tfrac{\delta}{2K}) \\
 &\implies \{ \mu_{\realbest} - \mu_i \leq 2\dev_{n_i(t)}(\tfrac{\delta}{2K}) \} \\
 &\implies \{ n_{i}(t) < \tau_i \} \cup \{ n_{i}(t) \geq \tau_i,  \mu_{\realbest} - \mu_i \leq 2\dev_{n_i(t)}(\tfrac{\delta}{2K}) \} \\
 &\implies \{ n_{i}(t) < \tau_i \} \cup \{ n_{i}(t) \geq \tau_i,  \mu_{\realbest} - \mu_i \leq \max\{ |\barmu - \mu_i|, u\epsilon \} \} \\
 &\implies \{ n_{i}(t) < \tau_i \} \cup \{ n_{i}(t) \geq \tau_i,  \mu_{\realbest} - \mu_i < u \epsilon \} \cup \{ n_{i}(t) \geq \tau_i,  i = \realbest\}
\end{align*}
by the definition of $\tau_i$.
Because at each time $s \leq t$ we have that {\em some} $i=\ell_s$, if $ |\{ s \leq t: h_s \neq \realbest\}| \geq \sum_{i=0}^K \tau_i$, we have that
\begin{align*}
\{ t \geq 2\sum_{i=0}^K \tau_i \} \implies \{ \exists i : n_{i}(t) \geq \tau_i \text{ and }  \mu_{\realbest} - \mu_i < u \epsilon \} \cup \{ n_{i}(t) \geq \tau_i \text{ and }  i = \realbest\}.
\end{align*}
We use the fact that such an $\ell_t = i \neq \realbest$ exists that satisfies $\mu_{\realbest} - \mu_i < u \epsilon$ to say
\begin{align*}
\exists i \neq \realbest: 
\widehat{\mu}_{i,n_i(t)} \geq \mu_i - \dev_{n_i(t)}(\tfrac{\delta}{2K}) \geq \mu_{i} - \max\{ \mu_{\realbest}-\mu_i , u\epsilon\}/2 \geq \mu_{\realbest}- \tfrac{3}{2} u \epsilon
\end{align*}
or $\ell_t = \realbest$ and 
\begin{align*}
\widehat{\mu}_{\realbest,n_{\realbest}(t)} \geq  \mu_{\realbest} - \dev_{n_{\realbest}(t)}(\tfrac{\delta}{2K}) \geq \mu_{\realbest} - \max\{ \mu_{\realbest} - \barmu, u\epsilon\} /2 = \min\{ \barmu, \mu_{\realbest} - \tfrac{1}{2} u \epsilon \}. 
\end{align*}
Because $\widehat{\mu}_{h_t,n_{h_t}(t)} \geq \max_{i=0,1,\dots,K}\widehat{\mu}_{i,n_{i}(t)}$, the proof of the claim is complete.

\end{document}